\documentclass[nohyperref]{article}

\usepackage{layouts}
\usepackage{microtype}
\usepackage{graphicx}
\usepackage{caption}
\usepackage{subcaption}
\usepackage{booktabs} 
\usepackage[dvipsnames]{xcolor}
\usepackage{hyperref}

\usepackage{cancel}

\usepackage[ruled, vlined, linesnumbered,algo2e]{algorithm2e}

\SetKwInput{KwData}{Inputs}
\SetKwInput{KwResult}{Output}

\usepackage[accepted]{icml2022}

\usepackage{float}

\usepackage[dvipsnames]{xcolor}

\usepackage{amsmath}
\usepackage{amssymb}
\usepackage{mathtools}
\usepackage{amsthm}
\usepackage{bm, bbm}

\usepackage{nth}
\usepackage{nicefrac}

\usepackage[textsize=tiny]{todonotes}

\usepackage{cleveref} 
\crefformat{equation}{(#2#1#3)}
\crefformat{figure}{Figure #2#1#3}
\crefformat{table}{Table #2#1#3}
\crefformat{appendix}{Appendix #2#1#3}
\crefformat{section}{Section #2#1#3}

\usepackage{multirow}
\usepackage{caption}

\usepackage{enumitem}
\theoremstyle{plain}
\newtheorem{theorem}{Theorem}
\newtheorem{proposition}[theorem]{Proposition}
\newtheorem{lemma}[theorem]{Lemma}

\newtheorem{recommendation}{Recommendation}

\theoremstyle{definition}
\newtheorem{definition}{Definition}

\theoremstyle{remark}

\newcommand{\R}{\mathbb{R}}

\renewcommand{\det}[0]{\text{det}}

\DeclareMathOperator*{\argmin}{argmin}
\DeclareMathOperator*{\argmax}{argmax}

\newcommand{\mcL}{\mathcal{L}}

\newcommand{\mcN}{\mathcal{N}}
\newcommand{\mcX}{\mathcal{X}}
\newcommand{\mcY}{\mathcal{Y}}
\newcommand{\mcG}{\mathcal{G}}
\newcommand{\mcM}{\mathcal{M}}
\newcommand{\mcF}{\mathcal{F}}


\newcommand{\norm}[1]{\|#1\|}

\newcommand{\spaced}[1]{\quad \text{#1} \quad}

\newcommand{\RN}[1]{%
  \textup{\uppercase\expandafter{\romannumeral#1}}%
}


\icmltitlerunning{Adapting the Linearised Laplace Model Evidence for Modern Deep Learning}

\begin{document}

\twocolumn[
\icmltitle{Adapting the Linearised Laplace Model Evidence for Modern Deep Learning} 

\icmlsetsymbol{equal}{*}

\begin{icmlauthorlist}
\icmlauthor{Javier Antorán}{cam}
\icmlauthor{David Janz}{equal,alberta}
\icmlauthor{James Urquhart Allingham}{equal,cam}
\icmlauthor{Erik Daxberger }{cam,mpi}
\icmlauthor{Riccardo Barbano}{ucl}
\icmlauthor{Eric Nalisnick}{uva}
\icmlauthor{José Miguel Hernández-Lobato}{cam}
\end{icmlauthorlist}

\icmlaffiliation{cam}{University of Cambridge}
\icmlaffiliation{alberta}{University of Alberta}
\icmlaffiliation{mpi}{Max Planck Institute for Intelligent Systems, Tübingen}
\icmlaffiliation{ucl}{ University College London}
\icmlaffiliation{uva}{University of Amsterdam}

\icmlcorrespondingauthor{Javier Antorán}{ja666@cam.ac.uk}

\icmlkeywords{Machine Learning, ICML, Laplace approximation, linearised Laplace, neural tangent kernel, NTK, BNNs, Bayesian neural networks, Bayesian inference, model selection, uncertainty estimation}
\vskip 0.3in
]


\printAffiliationsAndNotice{\icmlEqualContribution} 

\begin{abstract}
    The linearised Laplace method for estimating model uncertainty has received renewed attention in the Bayesian deep learning community.
    The method provides reliable error bars and admits a closed-form expression for the model evidence, allowing for scalable selection of model hyperparameters. 
    In this work, we examine the assumptions behind this method, particularly in conjunction with model selection.
    We show that these interact poorly with some now-standard tools of deep learning---stochastic approximation methods and normalisation layers---and make recommendations for how to better adapt this classic method to the modern setting.
    We provide theoretical support for our recommendations and validate them empirically on MLPs, classic CNNs, residual networks with and without normalisation layers, generative autoencoders~and~transformers.
\end{abstract}

\section{Introduction}
\label{sec:intro}


Model selection and uncertainty estimation are two important open problems in deep learning. The former aims to select network hyperparameters and architectures without costly cross-validation \citep{Mackay1992Thesis,Immer2021Marginal}. The latter provides a measure of fidelity of network predictions that can be used in downstream tasks such as experimental design \citep{barbano2022design}, sequential decision making \citep{Janz19successor} and in safety-critical settings \citep{fridman2019arguing}.
In this paper, we consider a classical approach to these two problems: the linearised Laplace method \citep{Mackay1992Thesis}, which has recently been shown to be one of the best performing methods for inference in neural networks \citep{Khan2019Approximate,Kristiadi2020being,Immer2021Improving,daxberger2021bayesian,daxberger2021laplace}.

Linearised Laplace approximates the output of a neural network (NN) with a first order Taylor expansion (a linearisation) around optimal NN parameters. It then uses standard linear-model-type error bars to approximate the uncertainty in the output of the NN, while retaining the NN point-estimate as the predictive mean.
The latter feature means that, unlike other Bayesian deep learning procedures, the linearised Laplace uncertainty estimates do not come at the cost of the accuracy of the predictive mean  \citep{snoek2019can,ashukha2020pitfalls,antoran2020depth}. A downside of the method is that its uncertainty estimates are very sensitive to the choice of certain (prior) hyperparameters \citep{daxberger2021bayesian}. Our work looks at the model evidence maximisation method for choosing these hyperparameters, as used in the seminal work of \citet{Mackay1992Thesis} and later by \citet{Immer2021Marginal}. In contrast with often used cross-validation, evidence maximisation reduces model selection to an (often convex) optimisation problem, and can scale to a large number of hyperparameters.


Our contributions are the identification of certain incompatibilities between the assumptions underlying the classical linearised Laplace model evidence and modern deep learning methodology, and a number of recommendations on how to adapt the method in light of these. In particular:
\begin{itemize}[topsep=0pt]
    \item A core assumption of linearised Laplace is that the point of linearisation is a minimum of the training loss. When the neural network is not trained to convergence (and this is almost never done), this does not hold and results in
    severe deterioration of the model evidence estimate. We show that this can be corrected by instead considering the optima of the linearised model's loss, that is solving a convex optimisation problem.    
    \item We show that for networks with normalisation layers (such as batch norm \citep{ioffe2015batch}), the linearised Laplace predictive distribution can fail to be well-defined. However, this can be resolved by separately parametrising the prior corresponding to normalised and non-normalised network parameters. 
\end{itemize}
We provide both theoretical and empirical justification for both points above. The resulting recommended procedure significantly outperforms a na\"ive implementation on a series of standard tasks and a wide range of neural architectures.

\section{The linearised Laplace method}\label{sec:preliminaries}

We begin by reviewing a near-textbook version of the linearised Laplace method for the tractable approximation of uncertainty estimates for neural networks, with model evidence maximisation for hyperparamter selection \citep{MacKay2011Information,bishop2006pattern,Immer2021Improving,Immer2021Marginal}. We focus on the motivation and assumptions behind this method.

\paragraph{Notation} For a vector $v$ and a positive semidefinite matrix $M$ of compatible dimensions, $\|v\|_M$ denotes the norm $\sqrt{v^T M v}$ and we write $\|v\| = \|v\|_I$. We denote by $\cdot$ a vector-matrix or matrix-matrix product where this may help with clarity. For $m$ a positive integer and $f$ a function with parameters $p$, $\partial^m_p [f(p)](\tilde{p})$ denotes the $m$th order mixed partial derivatives of $f$ with respect to $p$ evaluated at $\tilde{p}$. We use $\partial^m_p f(\tilde{p})$ where no ambiguity exists. We assume such partial derivatives are well defined pointwise where~needed.

\paragraph{Model} We consider performing regression or classification using a neural network of the form $\mu(f(\theta,x))$, where $f \colon \Theta \times \mcX \mapsto \mcY$ is a neural network with parameter space $\Theta$, input space $\mcX$, output space $\mcY$ and $\mu$ is a linking function. We take $\mcX, \mcY, \Theta$ to be subsets of finite dimensional Euclidean spaces. We assume that we train the network $f$ by minimising a loss function of the form
\begin{equation}\label{eq:optobj}
    \mcL_{f,\Lambda}(\theta) = L(f(\theta,\cdot)) + \|\theta\|^2_\Lambda,
\end{equation}
where $L \colon \mcY^{\Theta\times\mcX} \mapsto \R_+$ gives the data-fit and is of the form $L(f(\theta, \cdot)) = \sum_i \ell(f(\theta, x_i), y_i)$ for some negative log-likelihood function $\ell$. We assume $L$ is a proper convex function. The term $\|\theta\|^2_\Lambda$ is a regulariser corresponding to the log density of a Gaussian prior over $\theta$. We take $\Lambda$, a positive-definite matrix, to be a model hyperparameter. 

The loss minimisation viewpoint set out in \cref{eq:optobj} is equivalent to finding the \emph{maximum a~posteriori} (MAP) solution to a posterior $P$ for $\theta$ defined by the Lebesgue density
\begin{equation}
    \frac{dP}{d\nu}(\theta) = \frac{1}{Z}e^{-\mcL_{f,\Lambda}(\theta)},
\end{equation}
for $Z>0$ a normalisation constant. Our aim will be to estimate the posterior distribution for $\mu(f(\theta, x))$ when $\theta \sim P$. For ease of exposition, we focus the posterior for $f$ throughout; the push-forward by $\mu$ adds no further complexity. 

\paragraph{Approximation} The method assumes that we have some parameters $\tilde\theta \in \Theta$ obtained by optimising the neural network loss $\mcL_{f,\Lambda}$ for some initial choice of $\Lambda$. For this section, we assume that $\tilde\theta$ is a minimum of $\mcL_{f,\Lambda}$. We proceed to apply approximations to $f$ and $P$ based on consecutive Taylor expansions about $\tilde\theta$.

First, we approximate $f$ with the affine function
\begin{equation}\label{eq:h-def}
    h(\theta, x) = f(\tilde\theta, x) + \partial_\theta f (\tilde\theta, x) \cdot ( \theta - \tilde\theta ),
\end{equation}
a first order approximation to $f$ about $\tilde\theta$. We then approximate the loss function for the linearised model, $\mcL_{h, \Lambda}(\theta) = L(h(\theta, \cdot)) + \|\theta\|^2_\Lambda$, with a second order Taylor expansion about $\tilde\theta$. Since $\partial_\theta h (\tilde\theta, \cdot) \;{=}\; \partial_\theta f(\tilde\theta, \cdot)$ and $\tilde\theta {\in} \argmin_{\theta}\mcL_{f,\Lambda}$, we have that $\partial_\theta \mcL_{h, \Lambda}(\tilde\theta) \:{=}\: \partial_\theta \mcL_{f, \Lambda}(\tilde\theta) \:{=}\: 0$, and thus the first order term vanishes. This leaves us with the approximation
\vspace{-0.07cm}
\begin{equation}\label{eq:G_function}
    \mcG_{\tilde\theta}(\theta) = \mcL_{h, \Lambda}(\tilde\theta) + \frac{1}{2}\| \theta-\tilde\theta\|^{2}_{\partial^2_\theta \mcL_{h, \Lambda}(\tilde\theta)}.
    \vspace{-0.05cm}
\end{equation}
We use the approximate loss $\mcG_{\tilde\theta}$ to define an approximate posterior $Q$ by taking its Lebesgue density to be proportional to $e^{-\mcG_{\tilde\theta}(\theta)}$. By inspection, $Q$ is then Gaussian with mean $\tilde\theta$ and covariance $(\partial^2_\theta \mcL_{h, \Lambda}(\tilde\theta))^{-1}$. The approximate predictive posterior is given by $h(\theta, \cdot)$, $\theta \sim Q$. Since $h$ is affine, this is again Gaussian. Writing the Hessian of $\mcL_{h, \Lambda}$ at $\tilde\theta$ as
\vspace{-0.05cm}
\begin{equation}
    \smash{\partial^2_\theta \mcL_{h, \Lambda}(\tilde\theta) = H + \Lambda} \,\,\,\,\text{with}\,\,\,\, H = \partial^2_\theta [L(h(\theta, \cdot))](\tilde\theta),
    \vspace{-0.05cm}
\end{equation}
and $J(\cdot) = \partial_\theta f(\tilde\theta, \cdot)$ for the Jacobian of $f$ at $\tilde\theta$, we can write the approximate predictive posterior distribution as
\vspace{-0.05cm}
\begin{equation}\label{eq:h-posterior}
    h(\theta, \cdot) \sim \mcN\left(
    f(\tilde\theta, \cdot),\ 
    \| J(\cdot)\|^2_{(H + \Lambda)^{-1}}\right).
    \vspace{-0.05cm}
\end{equation}
We have thus augmented the neural network mean with Gaussian error bars obtained from a linear model based on a feature expansion given by the Jacobian of the network.

\paragraph{Model selection} The predictive posterior $h(\theta, \cdot)$, $\theta \sim Q$ given in \cref{eq:h-posterior} has an explicit dependence on $\Lambda$.  This parameter significantly affects the predictive posterior variance, but we have no method for choosing it \textit{a~priori}. We instead follow an empirical-Bayes procedure: we interpret $\Lambda$ as the precision of a prior $Q_{0} = \mcN(0, \Lambda)$ and choose $\Lambda$ as that most likely to generate the observed data given the model $h(\theta, \cdot)$ with $\theta \sim Q_0$. This yields the (log) objective 
\begin{equation}
    \mcM_{\tilde\theta}(\Lambda) = 
    \log
    \int_\Theta e^{-\mcG_{\tilde\theta}(\theta)} d\nu,
\end{equation}
called the model evidence, with $\nu$ the Lebesgue measure. Here, $\mcM_{\tilde\theta}(\Lambda)$ depends on $\Lambda$ implicitly through $\mcG_{\tilde\theta}$. Our approximation $\mcG_{\tilde\theta}$ gives a tractable expression for $\mcM_{\tilde\theta}$,
\vspace{-0.05cm}
\begin{equation}\label{eq:model-evidence}
    \mcM_{\tilde\theta}(\Lambda) = -\frac{1}{2}\left[ \|\tilde\theta\|^{2}_{\Lambda}  + \log \det(\Lambda^{-1}H + I)\right] + C,
    \vspace{-0.05cm}
\end{equation}
where $C$ is independent of $\Lambda$. Throughout, we will constrain $\Lambda$ to the set of positive diagonal matrices, as in \citet{Mackay1992Thesis}. Maximising $\mcM_{\tilde\theta}$ is a concave optimisation problem.

\paragraph{Discussion} 
We made a number of assumptions in our derivation. First, that the data-fit term $L$ is convex. This is satisfied by the standard losses used to train neural networks. We also assumed that the posterior over weights $P$ is sharply peaked around its optima such that it can be approximated well by a quadratic expansion and that $h$ is a good approximation to $f$ near the linearisation point. These assumptions we do not question further. We made one further important assumption, that the linearisation point $\tilde \theta$ is a local minimum of $\mcL_{f, \Lambda}$ and thus it is also a  minima of the linearised loss $\mcL_{h, \Lambda}$. This final assumption will be the focus of our work.

Our overview of the linearised Laplace method diverged from the original presentation in one significant manner. \citet{Mackay1992Thesis} presents the method as an Expectation Maximisation (EM) procedure: 
in the E step, we perform optimisation to find $\tilde \theta$ and compute the approximate posterior $Q$ using this $\tilde\theta$ as in \cref{eq:G_function}; in the $M$ step, the prior precision $\Lambda$ is optimised such that the model evidence is maximised \cref{eq:model-evidence}. 
The E and M steps are iterated until convergence. 
In modern settings, retraining the network (that is, performing the E step more than once) is too expensive. A single step of the EM procedure (as presented) is used instead.
The posterior predictive mean is set to match $f(\tilde\theta, \cdot)$, ignoring that $\tilde\theta$ will no longer be a mode of $\mathcal{L}_{f,\Lambda}$ after an update of $\Lambda$.
This choice keeps the NN's predictions unchanged, and is considered an advantage of linearised Laplace over competing Bayesian deep learning~methods.

\section{Overview of results}\label{sec:main_results}

The linearised Laplace method with model-evidence maximisation procedure, as described, is mostly as first introduced by \citet{Mackay1992Thesis}. Since then, deep learning training procedures and architectures have changed. 
Stochastic first order methods are used to minimise the loss function in place of the second order full-batch methods common in classical literature \citep{lecun1996effiicient,Amari2000info}. We  often do not use a low value of the loss $\mcL_{f, \Lambda}$ as a stopping criterion, but instead monitor some separate validation metric. Also, normalisation layers are ubiquitous. 

Since the derivations of \cref{sec:preliminaries} assume that we linearise $f$ and expand $\mcL_{h, \Lambda}$ about a local minimum of $\mcL_{f, \Lambda}$ (and thus of $\mcL_{h, \Lambda}$), modern practises pose difficulties for the presented method. The rest of this section explores these issues in turn, proposes a modern adaptation of the linearised Laplace method, and discusses some interesting special cases. 

\subsection{An alternative adaptation of linearised Laplace}\label{sec:adapt-laplace}

We consider a na\"ive implementation of the linearised Laplace method in the context of modern neural networks as that using the linearisation point $\tilde\theta$ in the expression for the model evidence $\mcM_{\tilde \theta}$ \cref{eq:model-evidence}, even if this point is known to not be a local minimum of $\mcL_{f, \Lambda}$. 
We now propose an~alternative. 

We begin, as before, by linearising $f$ about  $\tilde\theta$, the point returned by (possibly stochastic or incomplete) optimisation of the neural network loss $\mcL_{f, \Lambda}$ and constructing the feature expansion $J\colon x \mapsto \partial_\theta f(\tilde\theta, \cdot)$. Under broad assumptions discussed in \cref{sec:dense-final-layer}, this choice means the posterior mean $f(\tilde\theta, \cdot)$ is contained within the linear span of the Jacobian features $J(\cdot)$. This yields credence to the interpretation of the resulting error bars as uncertainty about the NN output.

We diverge from \cref{sec:preliminaries} in how we approximate $\mcL_{h, \Lambda}$. We start by noting that since $\tilde\theta$ is not a local minimum of $\mcL_{f, \Lambda}$, it is not one of $\mcL_{h, \Lambda}$ either (\cref{proposition:f-not-then-h-not}). Thus, $\tilde\theta$ is not a suitable point for a quadratic approximation to $\mcL_{h,\Lambda}$. However, for any given $\Lambda$, the loss for the linearised model $\mcL_{h, \Lambda}$ is a convex function of $\theta$ ($L$ is convex and $h$ is linear in $\theta$), and thus has a well defined minimiser; expanding the loss about this minimiser will yield a more faithful approximation. 
Moreover, for each fixed $\theta$, $\mcM_{\theta}(\Lambda)$ is concave in $\Lambda$, yielding a maximiser. Iteratively minimising the convex $\mcL_{h,\Lambda}(\theta)$ and maximising the concave $\mcM_{\theta}(\Lambda)$ yields a simultaneous stationary point $(\theta_\star, \Lambda_\star)$ satisfying  
\begin{equation*}
    \textstyle{\theta_\star \!\in \argmin_\theta \mcL_{h,\Lambda_\star}(\theta) \!\!\spaced{and}\!\! \Lambda_\star \!\in \argmax_\Lambda \mcM_{\theta_\star}(\Lambda).}
\end{equation*} 
Our adaption performs evidence maximisation with an affine model $h$ where the basis expansion $J$ is fixed.
Unlike \citet{Mackay1992Thesis},  we do not retrain the neural network.

\begin{figure}[t]
\vspace{-0.0cm}
    \centering
    \includegraphics[width=\linewidth]{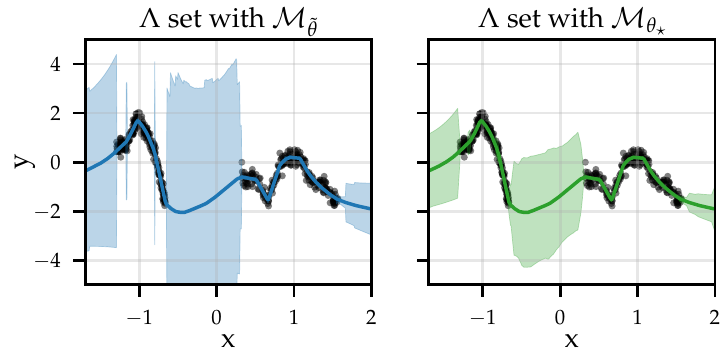}
    \vspace{-0.9cm}
    \caption{Linearised Laplace predictive mean and std-dev for a 2.6k parameter MLP trained on toy data from \citet{antoran2020depth}. Choosing $\Lambda$ with $\mcM_{\tilde\theta}$ yields error bars larger than the marginal std-dev of the targets. \Cref{rec:linear-weights} (using $\mcM_{\theta_{\star}}$) solves~this.}
    \label{fig:lin_vs_nn}
    \vspace{-0.25cm}
\end{figure}

In practice, we make one further approximation: rather than evaluating the curvature $\partial^2_\theta L(h(\theta, \cdot))$ afresh at 
successive modes of $\mcL_{h, \Lambda}$ found during the iterative procedure for computing $(\theta_\star, \Lambda_\star)$, we use the curvature at the linearisation point $H=\partial^2_\theta [L(h(\theta, \cdot))](\tilde\theta)$ throughout. This avoids the  expensive re-computation of the Hessian; experimentally we find that this does not affect the results (\cref{subsec:exp_assumptions}). The resulting model evidence expression matches that in \cref{eq:model-evidence}, with only the weights featuring in the norm~changed,
\begin{equation}\label{eq:model-evidence-new}
    \vspace{-0.05cm}
    \mcM_{\theta_{\star}}(\Lambda) = -\frac{1}{2}\left[ \|\theta_{\star}\|^{2}_{\Lambda}  + \log \det(\Lambda^{-1}H + I) \right] + C.
\end{equation}

\begin{recommendation}\label{rec:linear-weights}
While using the linearisation point $\tilde\theta$ in the construction of the feature expansion $J$ and the Hessian $H$ (as introduced in \cref{sec:preliminaries}), find a joint optimum $(\theta_\star, \Lambda_\star)$ for the feature-linear model and employ these to construct the corresponding model evidence $\mcM_{\theta_\star}$ (equation \cref{eq:model-evidence-new}) and to compute the predictive variance (equation \cref{eq:h-posterior}).
\end{recommendation}

We thus recommend employing a posterior distribution for $h$ of the same form as given in \cref{eq:h-posterior}, but with prior precision $\Lambda_\star$. 
In \cref{sec:experiments}, we provide extensive evidence supporting our alternative adaptation of the linearised Laplace model evidence. \Cref{fig:lin_vs_nn} provides an illustrative example.

\subsection{Linearised Laplace with normalised networks}\label{sec:normalised-networks}

We now study linearised Laplace in the presence of normalisation layers. For this, we put forth the following formalism:

\begin{definition}[Normalised networks] \label{def:normalised_networks}
We say that a set of networks $\mcF \subset \mcY^{\Theta\times \mcX}$ is normalised if $\Theta$ can be written as a \emph{direct sum} $\Theta' \oplus \Theta''$, with $\Theta''$ non-empty, such that for all networks $f \in \mcF$ and parameters $\theta' + \theta'' \in \Theta' \oplus \Theta''$,
\begin{equation*}
   f(\theta' + \theta'', \,\cdot\,) = f(\theta' + k\theta'', \,\cdot\,)
\end{equation*}
for all $k>0$.
\end{definition}
Throughout, we write $\theta',\theta''$ to denote the respective projections onto $\Theta',\Theta''$ of a parameter $\theta \in \Theta$; for ease of notation, we will assume these projections are aligned with a standard basis on $\Theta$. That is, we write our parameter vectors as the sum of $k \theta''$, which is only non-zero for normalised weights and $f$ is invariant to $k$, and $\theta'$, for which the opposite holds.  We illustrate this with an example in \cref{appendix:definition_example}.

Our formalism requires only that a single group of normalised parameters $\Theta''$ exists. However, by applying the definition repeatedly, we encompass networks with any number of normalisation layers, and all our results extend to this case. The formalism can be used to model the effects of layer norm \citep{Ba2016layer}, group norm \citep{He2020group} or batch norm \citep{ioffe2015batch}, and even some so-called normalisation-free methods \citep{Brock21propagation,Brock2021free}.

Our focus on normalised networks is motivated by the following observation:
\begin{proposition}\label{proposition:no-normalised-minimum}
For any normalised network $f$ and positive definite matrix $\Lambda$, the loss $\mcL_{f,\Lambda}$ has no local minima. 
\end{proposition}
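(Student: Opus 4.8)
The plan is to exploit the scale invariance in Definition~\ref{def:normalised_networks} directly: for any $f \in \mcF$ and any $\theta = \theta' + \theta'' \in \Theta' \oplus \Theta''$, the data-fit term $L(f(\theta,\cdot))$ depends only on $\theta'$ and on the \emph{direction} of $\theta''$, since $f(\theta' + k\theta'',\cdot) = f(\theta' + \theta'',\cdot)$ for all $k>0$. The regulariser $\|\theta\|^2_\Lambda$, however, is not scale invariant in $\theta''$. The idea is to show that near any candidate point one can strictly decrease the loss by shrinking the $\Theta''$-component toward $0$, so no point can be a local minimum.

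Concretely, I would fix an arbitrary $\theta = \theta' + \theta'' \in \Theta$ and consider the one-parameter family $\theta(k) = \theta' + k\theta''$ for $k \in (0,1]$. By scale invariance $L(f(\theta(k),\cdot)) = L(f(\theta,\cdot))$ is constant in $k$, so along this curve
\begin{equation*}
    \mcL_{f,\Lambda}(\theta(k)) = L(f(\theta,\cdot)) + \|\theta'\|^2_\Lambda + k^2 \|\theta''\|^2_\Lambda,
\end{equation*}
using that the projections are aligned with a standard basis on $\Theta$ so that $\|\theta(k)\|^2_\Lambda = \|\theta'\|^2_\Lambda + k^2\|\theta''\|^2_\Lambda$ (here I would note, or mildly assume, block structure of $\Lambda$ across $\Theta' \oplus \Theta''$, which holds since $\Lambda$ is constrained to be diagonal). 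Because $\Theta''$ is non-empty and $\Lambda$ is positive definite, $\|\theta''\|^2_\Lambda > 0$ whenever $\theta'' \neq 0$, so $k \mapsto \mcL_{f,\Lambda}(\theta(k))$ is strictly increasing on $(0,1]$; hence every neighbourhood of $\theta$ contains points (namely $\theta(k)$ for $k$ slightly less than $1$) with strictly smaller loss, ruling out a local minimum at any $\theta$ with $\theta'' \neq 0$. For the remaining case $\theta'' = 0$, I would instead perturb \emph{into} $\Theta''$: pick any nonzero $v \in \Theta''$ and a direction; by continuity/differentiability of $f$ the data-fit term changes at most to first order while, having started the perturbation correctly, the curve $t \mapsto \theta' + t v$ makes $\|\cdot\|^2_\Lambda$ grow only quadratically. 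The cleaner route here is to re-use the invariance: for any small $t>0$, $f(\theta' + tv, \cdot) = f(\theta' + v, \cdot)$, so $L$ is already constant along $t \mapsto \theta' + tv$ for $t>0$, meaning $\mcL_{f,\Lambda}(\theta' + tv) \to \mcL_{f,\Lambda}(\theta') = \mcL_{f,\Lambda}(\theta' + 0 \cdot v)$ as $t \downarrow 0$ but with the limiting value equal to $L(f(\theta'+v,\cdot)) + \|\theta'\|^2_\Lambda$; if this equals $L(f(\theta',\cdot)) + \|\theta'\|^2_\Lambda$ the loss is \emph{constant} on the ray and $\theta'$ is not a \emph{strict} minimum, and in fact not a local minimum either because one can then move slightly off the ray. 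I would want to tighten this last sub-case rather than leave it hand-wavy.

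The main obstacle I anticipate is precisely the degenerate case $\theta'' = 0$: here the "shrink the normalised block" argument gives no strict decrease, and one must argue that the invariance itself (a whole positive ray $\{\theta' + k v : k>0\}$ on which $\mcL_{f,\Lambda}$ takes the constant value $L(f(\theta'+v,\cdot)) + \|\theta'\|^2_\Lambda$, with $\theta'$ sitting at its closure) forces non-minimality at $\theta'$ too. One has to handle the possibility that $L(f(\theta',\cdot))$ itself (the $k=0$ value, which need not be governed by the invariance) is strictly smaller than $L(f(\theta'+v,\cdot))$, which would make $\theta'$ look locally optimal along that ray; the resolution is that $f(\theta' + kv,\cdot)$ is independent of $k>0$ but $\theta' + kv \to \theta'$, and if $f$ is continuous in $\theta$ then $f(\theta',\cdot) = \lim_{k\downarrow 0} f(\theta'+kv,\cdot) = f(\theta'+v,\cdot)$, so actually $L(f(\theta',\cdot)) = L(f(\theta'+v,\cdot))$ and the ray value extends continuously to $k=0$ — reducing this case to the previous one. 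Modulo a continuity assumption on $f$ (which the paper's pointwise-differentiability setup supplies), this closes the argument; writing the two cases uniformly and citing the diagonal/block structure of $\Lambda$ are the only remaining bookkeeping points.
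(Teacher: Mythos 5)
Your main argument---hold $\theta'$ fixed, shrink the normalised block by a factor $k<1$, observe that the data fit $L(f(\theta'+k\theta'',\cdot))$ is unchanged by \cref{def:normalised_networks} while $\|\theta'+k\theta''\|^2_\Lambda$ strictly decreases whenever $\theta''\neq 0$---is exactly the argument the paper gives (the paper only sketches this proposition in the text following its statement and explicitly omits a formal proof). For points with $\theta''\neq 0$ your write-up is correct and complete.

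The trouble is your treatment of the subcase $\theta''=0$, which the paper silently ignores. Your proposed resolution does not close the case; it inverts it. If $f$ is continuous at a point $\theta'$ with zero normalised component, then for every $\theta''\in\Theta''$ we have $f(\theta'+\theta'',\cdot)=\lim_{k\downarrow 0}f(\theta'+k\theta'',\cdot)=f(\theta',\cdot)$, so the data fit becomes constant in \emph{all} of $\theta''$, not just along rays. But then $\mcL_{f,\Lambda}(\theta'+tv)=L(f(\theta',\cdot))+\|\theta'\|^2_\Lambda+t^2\|v\|^2_\Lambda$ is \emph{strictly larger} than $\mcL_{f,\Lambda}(\theta')$ for every $t>0$: moving into $\Theta''$ increases the loss, and the decrease direction you used in the first case (shrinking $k$) is unavailable at $k=0$. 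So the case does not ``reduce to the previous one''; under your continuity assumption the point $\theta'$ minimising the restricted loss over $\Theta'$ would in fact be a genuine local (indeed global) minimum, contradicting the proposition. The honest way out is that for actual normalisation layers $f$ is not continuous (typically not even defined) at $\theta''=0$, because the normalisation divides by an activation standard deviation that vanishes there; such points should be excluded from $\Theta$, after which your first case covers everything. You identified a real edge case that the paper's sketch glosses over, but the fix is to exclude it, not to argue through it via continuity.
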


To see this, note that the data term fit $L(f(\theta' + k\theta'', \cdot))$ is invariant to the choice of $k>0$, but we can always decrease the prior term $\|\theta'+ k\theta''\|^2_\Lambda$ by decreasing $k$. Since $k \in \R_{+}$ has no minimal value,  $\mcL_{f,\Lambda}(\theta'+ k\theta'') = L(f(\theta'+k\theta'', \cdot)) + \|\theta'+k\theta''\|^2_\Lambda$ has no local minima. 

As in \cref{sec:adapt-laplace}, minimisers of the linear loss $\mcL_{h, \Lambda}$ remain well-defined (the loss remains strictly convex). However, in this case, $\tilde\theta$ cannot minimise $\mcL_{h,\Lambda}$: the linearisation point minimises $\mcL_{h, \Lambda}$ only if it minimises $\mcL_{f, \Lambda}$ (\cref{proposition:f-not-then-h-not}). To correct this, from hereon we follow \cref{rec:linear-weights}.

An even larger concern raised by \cref{proposition:no-normalised-minimum} is that the linearisation point is identified only up to the scaling $k$ of the normalised parameters $\theta''$. Since $k$ is arbitrary, and does not affect the predictions of the neural network (by definition), it ought not affect the predictive variance returned by the linearised Laplace method. However, in general, it does. See \cref{fig:k_scan} for a demonstration of this. 
Nonetheless, as shown by the following proposition, it suffices to regularise the normalised parameters $\theta''$ separately from $\theta'$ to recover a unique predictive posterior independent of $k$.
\begin{figure}[H]
\vspace{-0.1cm}
    \centering
    \includegraphics[width=\linewidth]{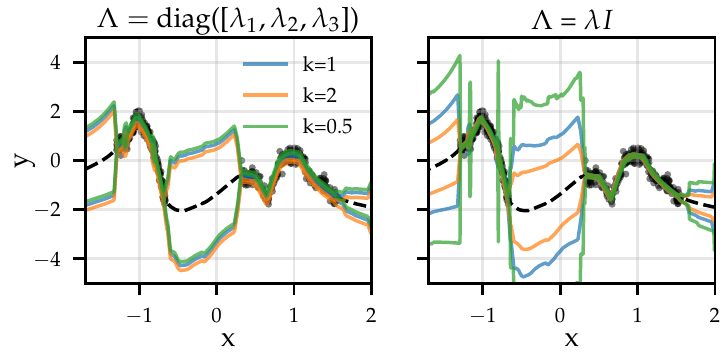}
    \vspace{-0.9cm}
    \caption{For a normalised MLP with an isotropic prior precision, modifying the scale of the normalised weights $\tilde\theta''$ in the linearisation point changes the error bars after hyper-parameter optimisation (right). Incorporating \cref{rec:factorised_prior} fixes the issue (left).}
    \label{fig:k_scan}
    \vspace{-0.15cm}
\end{figure}

\begin{proposition}\label{proposition:unique-posterior}
For normalised neural networks, using a regulariser of the form $\|\theta'\|^2_{\Lambda'} + \|\theta''\|^2_{\Lambda''}$ with $\Lambda'$ and $\Lambda''$ parametrised independently and chosen according to \cref{rec:linear-weights}, the predictive posterior $h(\theta,\cdot)$, $\theta \sim Q$ induced by a linearisation point $\theta'+k\theta''$ is independent of the choice of $k>0$.
\end{proposition}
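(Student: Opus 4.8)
The plan is to produce an explicit linear change of variables on $\Theta$ that identifies the alternating optimisation problem of \cref{rec:linear-weights} associated to the linearisation point $\tilde\theta_k := \theta' + k\theta''$ with the one associated to $\tilde\theta_1 = \theta' + \theta''$, and then to check that this change of variables leaves the predictive posterior of \cref{eq:h-posterior} fixed. Let $T_k \colon \Theta \to \Theta$ act as the identity on $\Theta'$ and as multiplication by $1/k$ on $\Theta''$; in the assumed standard basis $T_k$ is a positive diagonal matrix and $T_k \tilde\theta_k = \tilde\theta_1$.

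First I would turn \cref{def:normalised_networks} into derivative identities at $\tilde\theta_k$. Differentiating $f(\theta'+k\theta'',\cdot) = f(\theta'+\theta'',\cdot)$ in the $\Theta'$ directions gives $\partial_{\theta'} f(\tilde\theta_k,\cdot) = \partial_{\theta'} f(\tilde\theta_1,\cdot)$; differentiating $f(\theta'+kv,\cdot)=f(\theta'+v,\cdot)$ in the $\Theta''$ directions (degree-$0$ homogeneity in $v$) and evaluating at $v=\theta''$ gives $\partial_{\theta''} f(\tilde\theta_k,\cdot) = \tfrac1k\,\partial_{\theta''} f(\tilde\theta_1,\cdot)$. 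Writing $J_k(\cdot)=\partial_\theta f(\tilde\theta_k,\cdot)$ and $\bar J = J_1$, these combine to $J_k = \bar J\cdot T_k$, and together with $f(\tilde\theta_k,\cdot) = f(\tilde\theta_1,\cdot)$ they give $h_k(\theta,\cdot) = h_1(T_k\theta,\cdot)$, where $h_j$ denotes the linearised network built at $\tilde\theta_j$. Since $h$ is affine, the curvature retained in \cref{rec:linear-weights}, $H_k = \partial^2_\theta[L(h_k(\theta,\cdot))](\tilde\theta_k) = \sum_i J_k(x_i)^T\,\partial^2\ell(f(\tilde\theta_k,x_i),y_i)\,J_k(x_i)$, equals $T_k^T\bar H T_k$, because $f(\tilde\theta_k,x_i)=f(\tilde\theta_1,x_i)$ makes the middle factors coincide with those of $\bar H=H_1$.

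Next I would track the factorised prior. With regulariser $\|\theta'\|^2_{\Lambda'}+\|\theta''\|^2_{\Lambda''}$, i.e.\ block-diagonal precision $\Lambda=\Lambda'\oplus\Lambda''$, the substitution $\psi = T_k\theta$ sends $\|\theta\|^2_\Lambda$ to $\|\psi\|^2_{\hat\Lambda}$ with $\hat\Lambda = \Lambda'\oplus k^2\Lambda'' = T_k^{-T}\Lambda T_k^{-1}$, so $\mcL_{h_k,\Lambda}(\theta) = \mcL_{h_1,\hat\Lambda}(\psi)$; and using $\det(AB+I)=\det(BA+I)$ together with $T_k\Lambda^{-1}T_k^T=\hat\Lambda^{-1}$ and $H_k=T_k^T\bar HT_k$, the evidence $\mcM_\theta(\Lambda)$ of \cref{eq:model-evidence-new} for the $h_k$-problem at $(\theta,\Lambda)$ equals that for the $h_1$-problem at $(\psi,\hat\Lambda)$. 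Hence $\Phi_k\colon(\theta,\Lambda)\mapsto(T_k\theta,\,T_k^{-T}\Lambda T_k^{-1})$ is a bijection intertwining the two alternating problems. This is where the hypothesis is used: $T_k$ is diagonal and rescales the whole $\Theta''$ block by a single scalar, so $\Phi_k$ maps the feasible set of positive diagonal precisions that are \emph{free on $\Theta'$ and free on $\Theta''$} onto itself --- exactly the freedom granted by parametrising $\Lambda'$ and $\Lambda''$ independently, and precisely what fails for a single shared/isotropic precision (cf.\ \cref{fig:k_scan}). Consequently $(\theta_\star,\Lambda_\star)$ is a joint stationary point of the $h_k$-problem iff $\Phi_k(\theta_\star,\Lambda_\star)=(T_k\theta_\star,\hat\Lambda_\star)$ is one of the ($k$-free) $h_1$-problem.

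Finally I would read off \cref{eq:h-posterior} with prior precision $\Lambda_\star$: its mean $f(\tilde\theta_k,\cdot) = f(\tilde\theta_1,\cdot)$ is already $k$-independent, and since $H_k+\Lambda_\star = T_k^T(\bar H+\hat\Lambda_\star)T_k$ the variance is $J_k(\cdot)(H_k+\Lambda_\star)^{-1}J_k(\cdot)^T = \bar J(\cdot)(\bar H+\hat\Lambda_\star)^{-1}\bar J(\cdot)^T$, which involves only the $k$-free quantities $\bar J,\bar H$ and a stationary precision $\hat\Lambda_\star$ of the $k$-free $h_1$-problem; hence the predictive posterior does not depend on $k$. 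The main obstacle I anticipate is the simultaneous bookkeeping of the three induced transformations --- the Jacobian features transform by $T_k$, the curvature $H$ by $T_k^T(\cdot)T_k$, and the precision $\Lambda$ contravariantly by $T_k^{-T}(\cdot)T_k^{-1}$ --- and, relatedly, checking that the constraint set on $\Lambda$ is genuinely $\Phi_k$-invariant, which is the only step that truly consumes the independence assumption on $\Lambda',\Lambda''$. Once $T_k$ is recognised as a symmetry of the joint problem, the rest is a short computation with the Sylvester determinant identity and with pushforwards of Gaussians through affine maps.
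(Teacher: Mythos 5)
Your proposal is correct and follows essentially the same route as the paper: the scaling identities $J_k = \bar J\, T_k$ and $H_k = T_k^T \bar H\, T_k$ are the content of Lemma~\ref{lemma:scaling}, the bijection $\Phi_k$ intertwining the two alternating problems is Lemma~\ref{lemma:stationary-points}, and the final cancellation in the predictive variance is the paper's proof of Proposition~\ref{proposition:unique-posterior}. Your packaging of the block scalings into the single diagonal map $T_k$ and your use of $\det(AB+I)=\det(BA+I)$ streamlines the paper's block-wise Schur-complement computation of the log-determinant, but the underlying argument --- including the observation that only an independently parametrised $\Lambda''$ makes the feasible set $\Phi_k$-invariant --- is the same.
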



Briefly, the result follows because the Jacobian entries corresponding to weights $k \theta''$ scale with $k^{-1}$.
This is illustrated in \cref{fig:contour} (leftmost plot), where
as we move further from the origin, weight settings of equal likelihood $L(f(\theta, \cdot))$ move further from each other. Given an un-scaled reference solution $(\theta_\star, \Lambda_\star)$,
as we vary $k$ in $k \theta''$, the linear model weights and prior precisions 
that simultaneously optimise $\mcL_{h, \Lambda_{\star}}$ and~$\mcM_{\theta_{\star}}$
 scale as $(\theta_\star', k\theta_\star'')$, and $(\Lambda_{\star}', k^{-2} \Lambda''_\star)$ respectively. These scalings cancel each other in the predictive posterior, which remains invariant.
When $\Lambda''$ can not change independently of $\Lambda'$, this cancellation does not occur.

By induction, \cref{proposition:unique-posterior} applies to networks with multiple normalisation layers. Our recommendation is thus:
\begin{recommendation}\label{rec:factorised_prior}
When using the linearised Laplace method with a normalised network, use an independent regulariser for each normalised parameter group present.
\end{recommendation}
An example of a suitable regulariser for a network with normalised parameter groups $\theta^{(1)}, \theta^{(2)}, \dotsc, \theta^{(L)}$ and non-normalised parameters $\theta'$ would be
\begin{equation*}
    \lambda'\|\theta'\|^2 + \lambda_1 \| \theta^{(1)}\|^2 + \lambda_2 \| \theta^{(2)}\|^2 + \dotsc + \lambda_L \| \theta^{(L)}\|^2
\end{equation*}
for independent parameters $\lambda', \lambda_1, \lambda_2, \dotsc, \lambda_L > 0$. In general, this involves setting independent priors for each layer of the network. We note that \cref{proposition:unique-posterior} holds even when $\Lambda_{\star}$ is found by evaluating the Hessian at the linearisation point $\tilde\theta$ instead of $\theta_{\star}$, as suggested in \cref{sec:adapt-laplace}.

\begin{figure*}[tbp]
\vspace{-0.05cm}
    \centering
    \includegraphics[width=\linewidth]{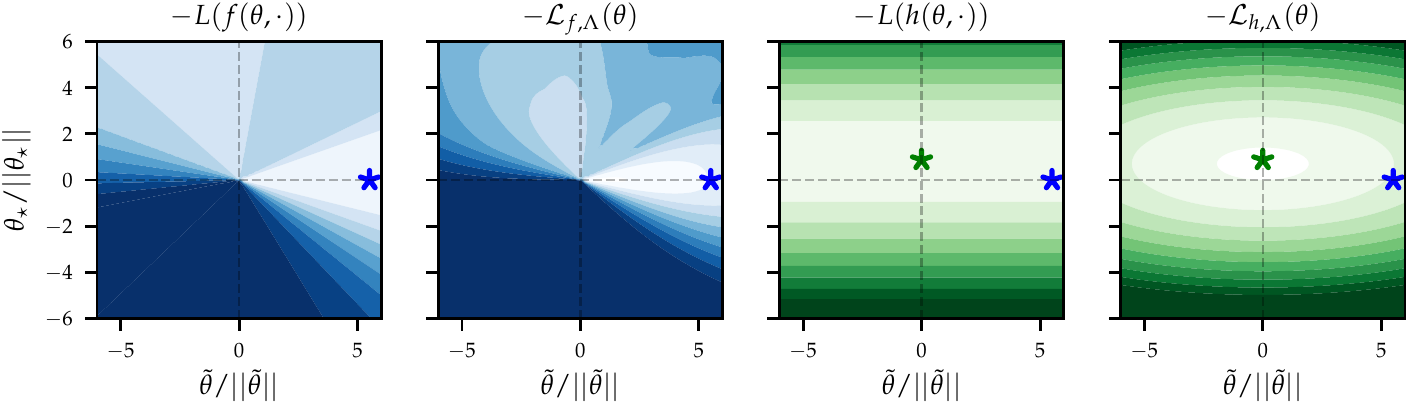}
    \vspace{-0.7cm}
    \caption{Log likelihood $L(f(\theta, \cdot))$ (left) and log posterior $\mcL_{f}$ (left middle) density for an MLP with layer norm, both plotted as functions of a 2d slice of the input layer weights.
    The horizontal axis corresponds to the direction of $\tilde\theta$ while the vertical to $\theta_{\star}$. The linearization point found with SGD $\tilde\theta$ (\textcolor{blue}{$\bigstar$})  is not an optima of $\mcL_{f, \Lambda}$ or $\mcL_{h, \Lambda}$. The linear model log posterior $\mcL_{h, \Lambda}$ (right) is convex and optimised~by~$\theta_{\star}$~(\textcolor{ForestGreen}{$\bigstar$}).
    }
    \vspace{-0.15cm}
    \label{fig:contour}
\end{figure*}

\subsection{Networks with a dense final layer}\label{sec:dense-final-layer}

We now look at networks with a dense linear final layer, a (very general) special case which provides further insight into linearised Laplace. These are networks of the form
\begin{equation}
\vspace{-0.025cm}
    f(\theta, \cdot) =   \phi(\theta_{d:}, \cdot) \cdot \theta_{:d},
\vspace{-0.025cm}
\end{equation}
where $\phi(\theta_{d:}, \cdot)$ is the output of the penultimate layer and $\theta_{:d}$ are the final layer weights.\footnote{The subindex ${d\!\!:}$ in $\theta_{d:}$ excludes last layer weights from $\theta$.} The derivative of the neural network with respect to the dense final layer weights is
\begin{equation*}
\vspace{-0.025cm}
    \partial_{\theta_{:d}} f(\theta, \cdot) = \phi(\theta_{d:}, \cdot),
\vspace{-0.025cm}
\end{equation*}
and thus $\phi(\theta_{d:}, \cdot)$ is contained within the Jacobian matrix. 
Consequently, the neural network output $f(\theta, \cdot)$ is always contained in the linear span of the Jacobian basis.
This motivates \cref{rec:linear-weights}, where we argue for the use of $\tilde\theta$ for network linearisation, as it allows for an easy linear model error-bars interpretation for the resulting uncertainty. 

Also, the form of the linearised model $h$ simplifies in the dense final layer case when the network is fully normalised. That is $f(\theta' + \theta'', \cdot) =   \phi(\theta_{d:}'', \cdot) \cdot \theta_{:d}'$. Here, the derivative of $\phi$ in the direction of $\tilde \theta''_{d:}$ is zero (see \cref{appendix:proof_directional_derivative}):
\begin{equation}\label{eq:directional-deriv-zero}
    \partial_{\theta''_{d:}} \phi(\tilde \theta''_{d:}, \cdot) \cdot \tilde \theta''_{d:} = 0.
\end{equation}
Thus cancellation occurs in \cref{eq:h-def}, yielding a model of the~form
\begin{equation}\label{eq:simple_linear_model}
    h(\theta, \cdot) =   \partial_\theta f(\tilde\theta, \cdot) \cdot \theta, \quad \theta \sim Q.
\end{equation}
That is, a linear model based on the features $J(\cdot) {=}\,\partial_\theta f(\tilde\theta, \cdot)$.

\subsection{Further implications and discussion}

Here, and in \cref{appendix:discussion}, we discuss details and implications of the presented recommendations and results.

\textbf{Finding the optimum of the linear model }
Our adapted linearised Laplace method requires identifying the joint stationary point $(\theta_{\star}, \Lambda_{\star})$. In general, this does not admit a closed-form solution. Instead, we alternate gradient-based optimisation of $\mcL_{h, \Lambda}$ and $\mcM_{\theta}$. In \cref{app:algorithm}, we provide a derivation for the gradient of $\mcL_{h, \Lambda}$, \cref{alg:linear_MAP_algorithm} for its computation and discuss implementation trade-offs. 
For normalised networks with dense output layers, implementing the simplified linear model \cref{eq:simple_linear_model} directly yields faster and more stable optimisation. 
Obtaining the gradients of $\mcM_{\theta}$ involves computing Hessian log-determinants, which in turn requires approximations in the context of large networks. 

\textbf{Magnitude of linearisation point in normalised networks }
Optimising a normalised neural network returns a solution for the normalised weights (those in $\Theta''$) up to some scaling factor $k>0$. How is $k$ determined?
Recall, from \cref{eq:directional-deriv-zero}, that for any $\theta'' \in \Theta''$, the directional derivative of the NN output in the direction of $\theta''$ is zero. This is also illustrated in \cref{fig:contour}. With this in mind, the dynamics of optimisation can be understood by analogy to a Newtonian system in polar coordinates. The weights are a mass upon which the data fit gradient acts as a tangential force. When discretised, this gradient pushes the weights away from zero.
On the other hand, regularisation from the prior term acts like a centripetal force, pushing the weights towards the origin. 
The resulting $k$ is thus proportional to the variance of the gradients of $\theta''$, and as such dependent on the learning rate and batch size hyperparameters, while being inversely proportional to the regularisation strength, e.g. weight decay. This has been studied extensively in the optimisation literature, including \citet{Laarhoven2017L2,Hoffer18norm,Cai2019quantitative,li2020intrinsic,Lobacheva2021periodic}. 

\textbf{On network biases in the Jacobian feature expansion }
Most normalisation techniques introduce scale invariance by dividing subsets of network activations by an empirical estimate of their standard deviation. These activations depend on the values of both weights and biases.
On the other hand, practical use of linearised Laplace commonly considers uncertainty due to only network weights \citep{daxberger2021bayesian,Maddox2021Adaptation}, excluding bias entries from Jacobian and Hessian matrices.
This departure from our assumptions can break the scale invariance necessary for \cref{lemma:scaling}. 
Whether invariance is preserved for the weights in the bias-exclusion setting depends on the relative effect of weights and biases on each subset of normalised activations. 
Invariance is preserved if the biases have small impact.
Empirically, we find that the inclusion (or exclusion) of biases does not alter the improvements obtained from applying our recommendations (see \cref{fig:hessian_bias_choice}). 

\textbf{Implications for the (non-linearised) Laplace method }
The (non-linearised) Laplace method \citep{ritter2018scalable,Kristiadi2020being} approximates the intractable posterior $P$ by means of a quadratic expansion around an optima, but without the linearisation step given in equation \cref{eq:h-def}. As discussed in \cref{sec:adapt-laplace}, when employing stochastic optimisation, early stopping, or normalisation layers, we will not find a minimiser of $\mcL_{f,\Lambda}$. Without a well-behaved surrogate linear model loss to fall back on, the Laplace method can yield very biased estimates of the model evidence.

\section{Formal results and proofs}\label{sec:proofs}

\begin{proposition}\label{proposition:f-not-then-h-not} For network $f$ with linearisation $h$ about $\tilde\theta$ and a positive definite regulariser $\Lambda$, if $\tilde\theta$ is not a stationary point of $\mcL_{f, \Lambda}$, it is not a local minimum of $\mcL_{h,\Lambda}$.
\end{proposition}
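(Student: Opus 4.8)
The plan is to reduce the claim to a single first-order identity: the gradients of $\mcL_{f,\Lambda}$ and $\mcL_{h,\Lambda}$ agree at the linearisation point $\tilde\theta$. Given this, the result is immediate, since a point at which a differentiable function has non-zero gradient cannot be a local minimum (equivalently, since $\mcL_{h,\Lambda}$ is convex, it is minimised at $\tilde\theta$ iff its gradient there vanishes).

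First I would record the two structural facts about $h$ coming from \cref{eq:h-def}. Evaluated at the linearisation point, $h$ agrees with $f$, i.e. $h(\tilde\theta,\cdot) = f(\tilde\theta,\cdot)$; and, being affine in $\theta$ with linear part $\partial_\theta f(\tilde\theta,\cdot)$, its parameter-Jacobian is the constant map $\partial_\theta h(\tilde\theta,\cdot) = \partial_\theta f(\tilde\theta,\cdot) = J(\cdot)$. Next I would differentiate the two regularised losses at $\tilde\theta$. The regulariser $\|\theta\|^2_\Lambda$ contributes the identical term $2\Lambda\tilde\theta$ to each gradient, so it suffices to compare the data-fit gradients. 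By the chain rule, the data-fit gradient at $\tilde\theta$ is the composition of the derivative of $L$, evaluated at the function-argument it is fed, with the parameter-Jacobian of that argument; for $L(f(\theta,\cdot))$ this is the derivative of $L$ at $f(\tilde\theta,\cdot)$ composed with $\partial_\theta f(\tilde\theta,\cdot)$, and for $L(h(\theta,\cdot))$ it is the derivative of $L$ at $h(\tilde\theta,\cdot)$ composed with $\partial_\theta h(\tilde\theta,\cdot)$. The two structural facts make these expressions coincide, so
\begin{equation*}
    \partial_\theta \mcL_{h,\Lambda}(\tilde\theta) = \partial_\theta \mcL_{f,\Lambda}(\tilde\theta).
\end{equation*}

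Finally, suppose $\tilde\theta$ is not a stationary point of $\mcL_{f,\Lambda}$, so $\partial_\theta \mcL_{f,\Lambda}(\tilde\theta)\neq 0$. By the identity just established, $\partial_\theta \mcL_{h,\Lambda}(\tilde\theta)\neq 0$ as well. Since $\mcL_{h,\Lambda}$ is differentiable at $\tilde\theta$ under the pointwise differentiability assumption used throughout, Fermat's first-order necessary condition is violated, so $\tilde\theta$ cannot be a local minimum of $\mcL_{h,\Lambda}$; alternatively, convexity of $\mcL_{h,\Lambda}$ (from convexity of $L$ and linearity of $h$ in $\theta$) means any local minimum is global and is characterised by a vanishing gradient, which again fails here.

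There is no substantial obstacle. The only care needed is in the regularity justifying the chain-rule computation and Fermat's condition, both already granted by the paper's standing assumption that the relevant partial derivatives exist pointwise; the entire content is the gradient-matching identity, which the paper in fact already invoked (in the converged case) when arguing that the first-order term vanishes in \cref{eq:G_function}.
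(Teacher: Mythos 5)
Your proposal is correct and follows essentially the same route as the paper's proof: both establish that $\partial_\theta \mcL_{h,\Lambda}(\tilde\theta) = \partial_\theta \mcL_{f,\Lambda}(\tilde\theta)$ via the chain rule together with the facts $h(\tilde\theta,\cdot)=f(\tilde\theta,\cdot)$ and $\partial_\theta h(\tilde\theta,\cdot)=\partial_\theta f(\tilde\theta,\cdot)$, and then conclude from the non-vanishing gradient that $\tilde\theta$ cannot be a local minimum. Your additional remark about convexity of $\mcL_{h,\Lambda}$ is a harmless strengthening but not needed for the claim.
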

\begin{proof}
Since $\tilde\theta$ is not a stationary point of $\mcL_{f, \Lambda}$, the gradient $\partial_\theta \mcL_{f,\Lambda}(\tilde\theta)$ is not identically zero. But $\partial_\theta \mcL_{f,\Lambda}(\tilde\theta)$ equal to
\begin{align*}
    \sum_i \partial_f &\ell(f(\tilde\theta, x_i), y_i) \cdot \partial_\theta f(\tilde\theta, x_i) + \partial_\theta [\|\theta\|^2_{\Lambda}](\tilde\theta)\\
    &= \sum_i \partial_h \ell(h(\tilde\theta, x_i), y_i) \cdot \partial_\theta h(\tilde\theta, x_i) + \partial_\theta [\|\theta\|^2_{\Lambda}](\tilde\theta),
\end{align*}
which is in turn equal to $\partial_\theta \mcL_{h,\Lambda}(\tilde\theta)$. Since this is thus non-zero, $\tilde\theta$ cannot be a local minimum of $\mcL_{h,\Lambda}$.
\end{proof}

The proof of \cref{proposition:no-normalised-minimum} was sketched in text; we omit it.

For the proof of \cref{proposition:unique-posterior}, we use the following notation. Consider a linearisation point $\theta' + \theta''$, with corresponding linearised function $h$, basis function $J$ and Hessian $H$. For $k>0$, $h_k,J_k,H_k$ denote these quantities corresponding to a linearisation point $\theta_{k} \coloneqq \theta' + k\theta''$. Moreover, we write
\begin{equation*}
    J = \begin{bmatrix} J' \\ J''
    \end{bmatrix} 
    \spaced{and} 
    H = \begin{bmatrix}
        H' & X^T \\
        X & H''
    \end{bmatrix}
\end{equation*}
for the sub-entries of $J$ and $H$ with dependencies on $\theta'$ and $\theta''$ respectively, with $X$ containing cross-terms. We refer to sub-entries of $J_k$ and $H_k$ in the same manner. 

With notation in place, we have the following scaling result:

\begin{lemma}\label{lemma:scaling}
Let $f$ be a normalised network and consider two alternative linearisation points $\tilde\theta' + \tilde\theta''$ and $\tilde\theta' + k \tilde\theta''$ for some $k>0$. Then,
\begin{equation*}
    \begin{bmatrix} J'_k \\ k^{-1} J''_k
    \end{bmatrix} = J
    \spaced{and}
    \begin{bmatrix}
        H'_k & k^{-1}X_k^T \\
        k^{-1}X_k & k^{-2}H''_k
    \end{bmatrix} = H.
\end{equation*}
Moreover, for all $\theta \in \Theta$, $h_k(\theta' + k\theta'', \cdot) = h(\theta' + \theta'', \cdot)$.
\end{lemma}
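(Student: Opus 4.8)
The statement has three parts: the scaling of the Jacobian blocks, the scaling of the Hessian blocks, and the identity $h_k(\theta' + k\theta'', \cdot) = h(\theta' + \theta'', \cdot)$. I would start from the defining invariance of a normalised network, $f(\theta' + k\theta'', \cdot) = f(\theta' + \theta'', \cdot)$ for all $k>0$, and differentiate it. The key observation is that the map $\theta'' \mapsto k\theta''$ is just a coordinate rescaling on $\Theta''$, so composing $f$ with it and using the chain rule will produce the $k^{\pm 1}$ factors. Concretely, fix the base point $\tilde\theta = \tilde\theta' + \tilde\theta''$ and write $g_k(\theta', \theta'') := f(\theta' + k\theta'', \cdot)$; by the invariance, $g_k \equiv g_1 = f$ as a function of $(\theta', \theta'')$ near $\tilde\theta$. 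Differentiating $g_k$ at $(\tilde\theta', \tilde\theta'')$ in the $\theta''$ directions pulls out a factor of $k$ from the inner linear map, i.e. $\partial_{\theta''}[f(\theta' + k\theta'', \cdot)](\tilde\theta', \tilde\theta'') = k\, \partial_{\theta''} f(\tilde\theta' + k\tilde\theta'', \cdot)$, while the left side equals $\partial_{\theta''} f(\tilde\theta' + \tilde\theta'', \cdot)$ by the invariance — and similarly in the $\theta'$ directions there is no factor. Evaluating at the shifted linearisation point $\tilde\theta' + k\tilde\theta''$ then gives $J''_k = k^{-1} J''$ and $J'_k = J'$, which is exactly the first claimed identity.

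For the Hessian, I would note $H = \partial^2_\theta [L(h(\theta,\cdot))](\tilde\theta)$, and that since $L(h(\theta,\cdot)) = \sum_i \ell(h(\theta, x_i), y_i)$ with $h$ affine in $\theta$ with linear part $J$, the Hessian has the Gauss–Newton form $H = \sum_i J(x_i)^T \, \partial^2_f \ell(f(\tilde\theta, x_i), y_i) \, J(x_i)$ (the linearity of $h$ kills the second-derivative-of-$h$ term). Writing this block-wise against the split $J = [J'; J'']$ immediately gives $H' = \sum_i (J')^T (\cdots) J'$, $X = \sum_i (J'')^T (\cdots) J'$, $H'' = \sum_i (J'')^T (\cdots) J''$, and the curvature factor $\partial^2_f\ell(f(\tilde\theta, x_i), y_i)$ is the same for both linearisation points because $f(\tilde\theta' + k\tilde\theta'', \cdot) = f(\tilde\theta' + \tilde\theta'', \cdot)$. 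So the $k$-scaling of $H_k$ is inherited entirely from that of $J_k$: $H'_k = H'$, $X_k = k^{-1} X$, $H''_k = k^{-2} H''$, which rearranges to the stated matrix identity. (If one wants to be careful about whether the Hessian is evaluated at $\tilde\theta$ or at $\theta_\star$, the same argument goes through verbatim since the curvature term only sees $f$ through its invariant output and $J$ is fixed at the linearisation point.)

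For the last identity, I would just expand both sides using \cref{eq:h-def}. We have $h(\theta' + \theta'', \cdot) = f(\tilde\theta, \cdot) + J' \cdot (\theta' - \tilde\theta') + J'' \cdot (\theta'' - \tilde\theta'')$, and $h_k(\theta' + k\theta'', \cdot) = f(\tilde\theta' + k\tilde\theta'', \cdot) + J'_k \cdot (\theta' - \tilde\theta') + J''_k \cdot (k\theta'' - k\tilde\theta'')$. Substituting $f(\tilde\theta' + k\tilde\theta'', \cdot) = f(\tilde\theta, \cdot)$, $J'_k = J'$, and $J''_k = k^{-1}J''$ makes the $k$'s cancel in the last term, and the two expressions coincide.

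**Main obstacle.**
The only genuinely delicate point is the chain-rule bookkeeping in the first step — keeping straight the distinction between "differentiate $f$ composed with the rescaling $\theta'' \mapsto k\theta''$" (which produces the $k$ factor) versus "$f$ evaluated at the already-rescaled point" — and making sure this is stated cleanly given the paper's somewhat informal treatment of the $\Theta' \oplus \Theta''$ split and the convention that parameter vectors are written as $\theta' + k\theta''$. I would also want a line confirming that differentiating the identity $f(\theta' + k\theta'', \cdot) = f(\theta' + \theta'', \cdot)$ is legitimate, i.e. appealing to the standing assumption that the relevant partial derivatives exist pointwise. Everything after that — the Gauss–Newton block decomposition and the final substitution — is routine.
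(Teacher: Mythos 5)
Your proposal is correct and follows essentially the same route as the paper's proof: the Jacobian scaling is obtained by differentiating the normalisation invariance (the paper phrases this via the limit definition of the directional derivative with the substitution $\delta \mapsto \delta/k$, which is exactly your chain-rule factor), the Hessian scaling follows from the Gauss--Newton decomposition with the affine second-derivative term vanishing and the curvature factor invariant because the network output is, and the identity for $h_k$ is the same direct substitution. The only differences are cosmetic (block-matrix versus entry-wise bookkeeping for $H_k$).
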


The proof of the above lemma is presented in \cref{appendix:scaling-proof}. We now use it to show how the optimal weights and regularisation parameters scale with the parameter $k$.

\begin{lemma}\label{lemma:stationary-points}
For $k > 0$, let $h_k$ be a linearisation of a normalised network $f$ about $\tilde\theta'+k\tilde\theta''$. Then $(\theta_k, \Lambda_k)$ are an optima of the resulting objectives $(\mcL_{h_k,\Lambda_k}, \mcM_{\theta_k})$ respectively if and only if they are of the form 
\begin{equation*}
    (\theta_k, \Lambda_k) = (\theta'_\star + k\theta''_\star, \Lambda'_\star + k^{-2}\Lambda''_\star)
\end{equation*}
where $(\theta_\star, \Lambda_\star)$ are optima of $(\mcL_{h,\Lambda_\star},\mcM_{\theta_\star})$ with $h$ a linearisation of $f$ about $\tilde\theta' + \tilde\theta''$.
\end{lemma}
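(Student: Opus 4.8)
The plan is to reduce everything to \cref{lemma:scaling} and then verify that the first-order optimality conditions transform correctly under the rescaling $k$. Throughout, $(\theta_\star, \Lambda_\star)$ is a fixed joint optimum of $(\mcL_{h,\Lambda_\star}, \mcM_{\theta_\star})$ for the unscaled linearisation point, and I want to show $(\theta'_\star + k\theta''_\star,\ \Lambda'_\star + k^{-2}\Lambda''_\star)$ is a joint optimum of $(\mcL_{h_k,\Lambda_k},\mcM_{\theta_k})$, and conversely. Since both problems are convex/concave respectively, it suffices to match stationarity conditions, and the ``if and only if'' then follows because the optima are exactly the stationary points.

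First I would handle the $\mcL_{h_k,\Lambda}$ side. Write a generic parameter for the scaled problem as $\theta' + k\theta''$ — this is a bijective reparametrisation of $\Theta = \Theta'\oplus\Theta''$ — and use the last clause of \cref{lemma:scaling}, $h_k(\theta'+k\theta'',\cdot) = h(\theta'+\theta'',\cdot)$, so the data-fit term $L(h_k(\theta'+k\theta'',\cdot)) = L(h(\theta'+\theta'',\cdot))$ is unchanged. For the regulariser, with the factorised prior $\|\cdot\|^2_{\Lambda_k}$ where $\Lambda_k = \Lambda' + k^{-2}\Lambda''$, we get $\|\theta' + k\theta''\|^2_{\Lambda_k} = \|\theta'\|^2_{\Lambda'} + k^2\|\theta''\|^2_{k^{-2}\Lambda''} = \|\theta'\|^2_{\Lambda'} + \|\theta''\|^2_{\Lambda''}$, again matching the unscaled objective exactly (using that $\Lambda'$ acts only on $\Theta'$ and $\Lambda''$ only on $\Theta''$, per the paper's basis-alignment convention). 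Hence $\mcL_{h_k,\Lambda_k}(\theta'+k\theta'') = \mcL_{h,\Lambda}(\theta'+\theta'')$ as functions of $\theta'+\theta''$, with $\Lambda = \Lambda' + \Lambda''$. So $\theta'+\theta''$ minimises $\mcL_{h,\Lambda}$ iff $\theta'+k\theta''$ minimises $\mcL_{h_k,\Lambda_k}$; in particular, the minimiser of $\mcL_{h,\Lambda_\star}$ being $\theta_\star = \theta'_\star + \theta''_\star$ gives that $\theta'_\star + k\theta''_\star$ minimises $\mcL_{h_k,\Lambda_k}$ with $\Lambda_k = \Lambda'_\star + k^{-2}\Lambda''_\star$.

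Next I would handle the $\mcM_\theta$ side. Using the block forms from \cref{lemma:scaling}, $H_k = \operatorname{blkdiag\text{-}like}$ with $H'_k = H'$, $X_k = kX$, $H''_k = k^2 H''$; equivalently $H_k = S_k H S_k$ where $S_k = \operatorname{diag}(I', kI'')$ in the aligned basis. Plug into \cref{eq:model-evidence-new} with $\theta = \theta'_\star + k\theta''_\star$ and argument $\Lambda = \Lambda' + k^{-2}\Lambda''$ where $(\Lambda',\Lambda'')$ ranges freely: the quadratic term is $\|\theta'_\star + k\theta''_\star\|^2_\Lambda = \|\theta'_\star\|^2_{\Lambda'} + \|\theta''_\star\|^2_{\Lambda''}$ as before, matching $\|\theta_\star\|^2_{\Lambda'+\Lambda''}$. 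For the log-det term, $\det(\Lambda^{-1}H_k + I) = \det((\Lambda' + k^{-2}\Lambda'')^{-1} S_k H S_k + I) = \det(S_k^{-1}(\Lambda'+k^{-2}\Lambda'')^{-1}S_k^{-1}\cdot S_k^2 H S_k^{-1}\cdot \ldots)$ — more cleanly, conjugate by $S_k$: $\det(\Lambda^{-1}H_k + I) = \det(S_k^{-1}(\Lambda^{-1}H_k)S_k + I) = \det((S_k^{-1}\Lambda^{-1}S_k^{-1})(S_k H S_k)\cdot\ldots)$; since $\Lambda = \Lambda'+k^{-2}\Lambda''$ commutes with $S_k$ and $S_k^{-1}\Lambda^{-1}S_k^{-1} = (\Lambda'+k^{-2}\Lambda'')^{-1}\cdot\operatorname{diag}(I',k^{-2}I'') = (\Lambda' + \Lambda'')^{-1}$, while $S_k H S_k = H_k$ would give back $H_k$, so instead I use $S_k^{-1} H_k S_k^{-1} = H$. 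Thus $\det(\Lambda^{-1}H_k + I) = \det(S_k^{-1}\Lambda^{-1}H_k S_k^{-1}\cdot\ldots)$ — to avoid sign errors I would just compute directly: $\Lambda^{-1}H_k = (\Lambda'+k^{-2}\Lambda'')^{-1} S_k H S_k$, and conjugating by $S_k^{-1}$ (similarity, determinant-preserving) gives $S_k^{-1}(\Lambda'+k^{-2}\Lambda'')^{-1}S_k^{-1} \cdot S_k^2 H S_k^2 S_k^{-1}$ — this is getting muddled, so the clean statement is: $S_k^{-1}(\Lambda^{-1}H_k + I)S_k = (\Lambda')^{-1}+\Lambda''^{-1})^{-1}H + I = (\Lambda'+\Lambda'')^{-1}H + I$, using $S_k^{-1}\Lambda^{-1}S_k = S_k^{-2}\Lambda^{-1} = (\Lambda'+\Lambda'')^{-1}$ and $S_k^{-1}H_k S_k = ?$ — here I must be careful that $S_k^{-1} H_k S_k \ne H$ in general since $H_k$ is not diagonal, but $\det(S_k^{-1}H_k S_k) = \det(H_k)$ and more usefully $\det(\Lambda^{-1}H_k+I) = \det(\Lambda^{-1})\det(H_k+\Lambda)$, and $\det(H_k + \Lambda) = \det(S_k)\det(H + S_k^{-1}\Lambda S_k^{-1})\det(S_k) = \det(S_k)^2\det(H + \Lambda'+\Lambda'')$ while $\det(\Lambda^{-1}) = \det((\Lambda'+k^{-2}\Lambda'')^{-1}) = \det(S_k)^{-2}\det((\Lambda'+\Lambda'')^{-1})$, so the $\det(S_k)^{\pm2}$ factors cancel and $\det(\Lambda^{-1}H_k+I) = \det((\Lambda'+\Lambda'')^{-1}H + I)$. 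Hence $\mcM_{\theta'_\star+k\theta''_\star}(\Lambda'+k^{-2}\Lambda'') = \mcM_{\theta_\star}(\Lambda'+\Lambda'') + (\text{const})$, so maximising over $(\Lambda',\Lambda'')$ is the same problem; since $\Lambda_\star = \Lambda'_\star + \Lambda''_\star$ maximises $\mcM_{\theta_\star}$, we get $\Lambda'_\star + k^{-2}\Lambda''_\star$ maximises $\mcM_{\theta'_\star + k\theta''_\star}$. Combining the two sides gives the ``if'' direction, and running the argument with $k$ replaced by $k^{-1}$ (swapping the roles of the two linearisation points) gives ``only if''; the equivalence of ``optima'' and ``stationary points'' follows from convexity of $\mcL_{h_k,\cdot}$ in $\theta$ and concavity of $\mcM_\cdot(\Lambda)$ in $\Lambda$.

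The main obstacle, and the part I would write out most carefully, is the determinant manipulation: getting the conjugation-by-$S_k$ bookkeeping right so that the $\det(S_k)$ factors provably cancel, and confirming that the cross-term block $X_k = kX$ is consistent with $S_k^{-1} H_k S_k^{-1} = H$ (it is: the $(1,2)$ block of $S_k^{-1}H_k S_k^{-1}$ is $1\cdot(kX)^T\cdot k^{-1} = X^T$, matching \cref{lemma:scaling}). A secondary subtlety is justifying that the reparametrisation $\Lambda \leftrightarrow (\Lambda',\Lambda'')$ with $\Lambda = \Lambda' + k^{-2}\Lambda''$ is a bijection of the admissible set of factorised positive-definite regularisers, which is immediate since $\Lambda'$ and $\Lambda''$ live on complementary coordinate blocks and $k^{-2} > 0$.
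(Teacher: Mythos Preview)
Your overall strategy is exactly the paper's: show that $\mcL_{h_k,\Lambda_k}(\theta'+k\theta'') = \mcL_{h,\Lambda_\star}(\theta'+\theta'')$ and $\mcM_{\theta_k}(\Lambda'+k^{-2}\Lambda'') = \mcM_{\theta_\star}(\Lambda'+\Lambda'')$ as functions, then invoke the bijections $\theta'+\theta'' \mapsto \theta'+k\theta''$ and $\Lambda'+\Lambda'' \mapsto \Lambda'+k^{-2}\Lambda''$. Your treatment of the $\mcL$ side is correct and matches the paper. For the $\mcM$ side, the paper uses the Schur determinant lemma block by block, whereas you conjugate by $S_k = \mathrm{diag}(I',kI'')$. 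That is a genuinely cleaner alternative---when done correctly it collapses the whole log-det computation to one line.

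However, your execution contains a real error, not just muddled bookkeeping. You take $X_k = kX$ and $H''_k = k^2 H''$, i.e.\ $H_k = S_k H S_k$. This is the wrong direction: from the proof of \cref{lemma:scaling} one has $J''_k = k^{-1}J''$, hence $H''_k = k^{-2}H''$, $X_k = k^{-1}X$, and therefore $H_k = S_k^{-1} H S_k^{-1}$. (The displayed statement of \cref{lemma:scaling} in the paper has the exponents inverted relative to its own proof; you should trust the proof.) With your convention $H_k = S_k H S_k$, the key identity $\det(\Lambda_k^{-1}H_k + I) = \det((\Lambda'+\Lambda'')^{-1}H + I)$ is simply false in general: you only reached it because two further mistakes cancelled. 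Concretely, your claim $S_k^{-1}\Lambda_k S_k^{-1} = \Lambda'+\Lambda''$ is wrong (it gives $\Lambda' + k^{-4}\Lambda''$), and your claim $\det(\Lambda_k^{-1}) = \det(S_k)^{-2}\det((\Lambda'+\Lambda'')^{-1})$ has the exponent sign flipped.

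Once you use the correct direction, everything is immediate: both $H_k = S_k^{-1}HS_k^{-1}$ and $\Lambda_k = \Lambda' + k^{-2}\Lambda'' = S_k^{-1}(\Lambda'+\Lambda'')S_k^{-1}$, so
\[
H_k + \Lambda_k = S_k^{-1}\bigl(H + \Lambda' + \Lambda''\bigr)S_k^{-1},
\]
whence $\det(H_k+\Lambda_k) = \det(S_k)^{-2}\det(H+\Lambda'+\Lambda'')$ and $\det(\Lambda_k) = \det(S_k)^{-2}\det(\Lambda'+\Lambda'')$. The $\det(S_k)^{-2}$ factors cancel in the ratio and you recover $\det(\Lambda_k^{-1}H_k + I) = \det((\Lambda'+\Lambda'')^{-1}H + I)$ cleanly. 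This is a nicer route than the paper's Schur-complement expansion, but you need to fix the sign of the conjugation to make it a valid proof.
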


\begin{proof} To prove the result, we will show that $\mcL_{h_k, \Lambda_k}(\theta'+k\theta'') = \mcL_{h,\Lambda_\star}(\theta' + \theta'')$ for all $\theta' + \theta'' \in \Theta$ and $\mcM_{\theta_k}(\Lambda' + k^{-2}\Lambda'') = \mcM_{\theta_\star}(\Lambda' + \Lambda'')$ for all strictly diagonal positive matrices $\Lambda',\Lambda''$ of compatible sizes. Then, the result follows by noting that for $k>0$ fixed, the mappings $\theta' + \theta'' \mapsto \theta' + k\theta''$ and $\Lambda' + \Lambda'' \mapsto \Lambda' + k^{-2}\Lambda''$ are bijections.

Consider the objective $\mcL_{h_k,\Lambda_k}$. By definition, $\mcL_{h_k,\Lambda_k}(\theta' + k\theta'')$ is given by
\begin{align*}
    L(h_k(\theta' + &k\theta'', \cdot)) + \|\theta'\|^2_{\Lambda'_k} + \|k\theta''\|^2_{\Lambda''_k} \\
    &= L(h(\theta' + \theta'', \cdot)) + \|\theta'\|^2_{\Lambda'_\star} + \|\theta''\|^2_{\Lambda''_\star},
\end{align*}
where the equality follows by \cref{lemma:scaling} and the definition of $\Lambda_k$. The right hand side is equal to $\mcL_{h,\Lambda_\star}(\theta' + \theta'')$ proving the first equality.

Consider the objective $\mcM_{\theta_k}$. For our claim, we need to show that
\begin{align*}
    \|\theta_k\|_{\Lambda' + k^{-2}\Lambda''} &+ \log \frac{\det(H_k + \Lambda' + k^{-2}\Lambda'')}{\det(\Lambda' + k^{-2}\Lambda'')} \\ 
    &= \|\theta_\star\|_{\Lambda' + \Lambda''} + \log \frac{\det(H + \Lambda' + \Lambda'')}{\det(\Lambda' + \Lambda'')}.
\end{align*}
The equality $\|\theta_k\|_{\Lambda' + k^{-2}\Lambda''} = \|\theta_\star\|_{\Lambda' + \Lambda''}$ holds trivially. We now show equality of the determinants. Let $d,D$ denote the dimensions of $\Theta'$ and $\Theta''$ respectively. By the Schur determinant lemma, the numerator $\det(H_k + \Lambda' + k^{-2}\Lambda'')$ is equal to
\begin{align*}
    \det(H''_k + \frac{\Lambda''_{d:}}{k^2})\,\det(H'_k + \Lambda'_{:d} - X(H''_k + \frac{\Lambda''_{d:}}{k^2})^{-1}X^T),
\end{align*}
where $\Lambda'_{:d} = [\Lambda'_{ij} \colon i,j \leq d]$ and $\Lambda''_{d:}$ is defined similarly. Using \cref{lemma:scaling}, $\det(H''_k + \frac{\Lambda''_{d:}}{k^2}) = (\frac{1}{k^2})^{D} \det(H'' + \Lambda''_{d:})$. Expanding the Schur complement term and using \cref{lemma:scaling} shows that it is independent of $k$. In turn, the denominator is given by
\begin{align*}
    \det(\Lambda' + k^{-2}\Lambda'') &= (\frac{1}{k^2})^{D}\det(\Lambda'_{:d})\,\det(\Lambda''_{d:}) \\
    &= (\frac{1}{k^2})^{D} \det(\Lambda' + \Lambda''),
\end{align*}
The $(\frac{1}{k^2})^{D}$ terms in the numerator and denominator cancel, yielding the claim.
\end{proof}

\begin{proof}[Proof of \cref{proposition:unique-posterior}] 
Using \cref{lemma:scaling} and \cref{lemma:stationary-points} and the notation defined therein,
\begin{equation*}
    \|J\|^2_{(H + \Lambda_\star)^{-1}} = \|J_k\|^2_{(H_k + \Lambda_k)^{-1}}.
\end{equation*}
Thus the errorbars induced by linearising about $\tilde\theta'+\tilde\theta''$ and $\tilde\theta'+k\tilde\theta''$ are equal for all $k > 0$.
\end{proof}
Note that the proof of the results required for \cref{proposition:unique-posterior} depends crucially on being able to scale $\Lambda''_k$ with $k$ while keeping $\Lambda'_k$ fixed. This motivates \cref{rec:factorised_prior}.

\section{Experiments}\label{sec:experiments}

We proceed to provide empirical evidence for our assumptions and recommendations. 
Specifically, in \cref{subsec:exp_assumptions}, we validate the assumptions made in \cref{sec:main_results}.
Then, in \cref{subsec:architectures}, we demonstrate that our recommendations yield improvements across a wide range of architectures.
In these first two subsections, we employ networks containing at most $46$k weights, since this is the largest model for which we can tractably compute the Hessian on an A100 GPU. This choice avoids confounding the effects described in \cref{sec:main_results} with any further approximations.
In \cref{subsec:large_models}, we show that our recommendations yield performance improvements on the $23$M parameter ResNet-50 network while employing the standard KFAC approximation to the Hessian \citep{Martens15kron,daxberger2021laplace}.

Unless specified otherwise, we:
1. train a NN to find $\tilde\theta$ using standard stochastic optimisation algorithms. 2. linearise the network about $\tilde\theta$ as in \cref{eq:h-def}. 3. optimise the linear model weights using $\mcL_{h, \Lambda}$ (\cref{alg:linear_MAP_algorithm}) and layer-wise regularisation parameters with $\mathcal{M}_{\theta_{\star}}$ \cref{eq:model-evidence-new}. 4. compute the posterior predictive distribution as in \cref{eq:h-posterior}.
We repeat this procedure with 5 random seeds and report mean results and standard error.
For each seed, the methods compared produce the same mean predictions $f(\tilde\theta, \cdot)$, only differing in their predictive variance. In this setting, the test Negative Log-Likelihood (NLL, lower is better) can be understood as a measure of uncertainty miscalibration.
The full details of the setup used for each experiment are provided in \cref{app:experimental_setup}.

\subsection{Validation of modelling assumptions} \label{subsec:exp_assumptions}
We validate the key conjectures stated in \cref{sec:main_results}.
If not specified otherwise, we employ a 46k parameter ResNet \citep{he2016deep} with batch-normalisation after every convolutional layer. The output layer is dense, satisfying \cref{eq:simple_linear_model}. 

\textbf{Choice of Hessian } In \cref{sec:adapt-laplace}, we suggest evaluating the Hessian of $\mcL_{h}$ at the linearisation point $\tilde\theta$ (instead of $\theta_{\star}$) for model evidence optimisation \cref{eq:model-evidence-new}. This avoids the need to recompute the Hessian throughout optimisation. \Cref{fig:hessian_bias_choice} (left) shows how the improvement from using the recommended model evidence $\mcM_{\theta_{\star}}$, as opposed to $\mcM_{\tilde\theta}$, dominates the effect of the choice of Hessian evaluation~point.

\begin{figure}[htbp]
\vspace{-0.075cm}
    \centering
    \includegraphics[width=\linewidth]{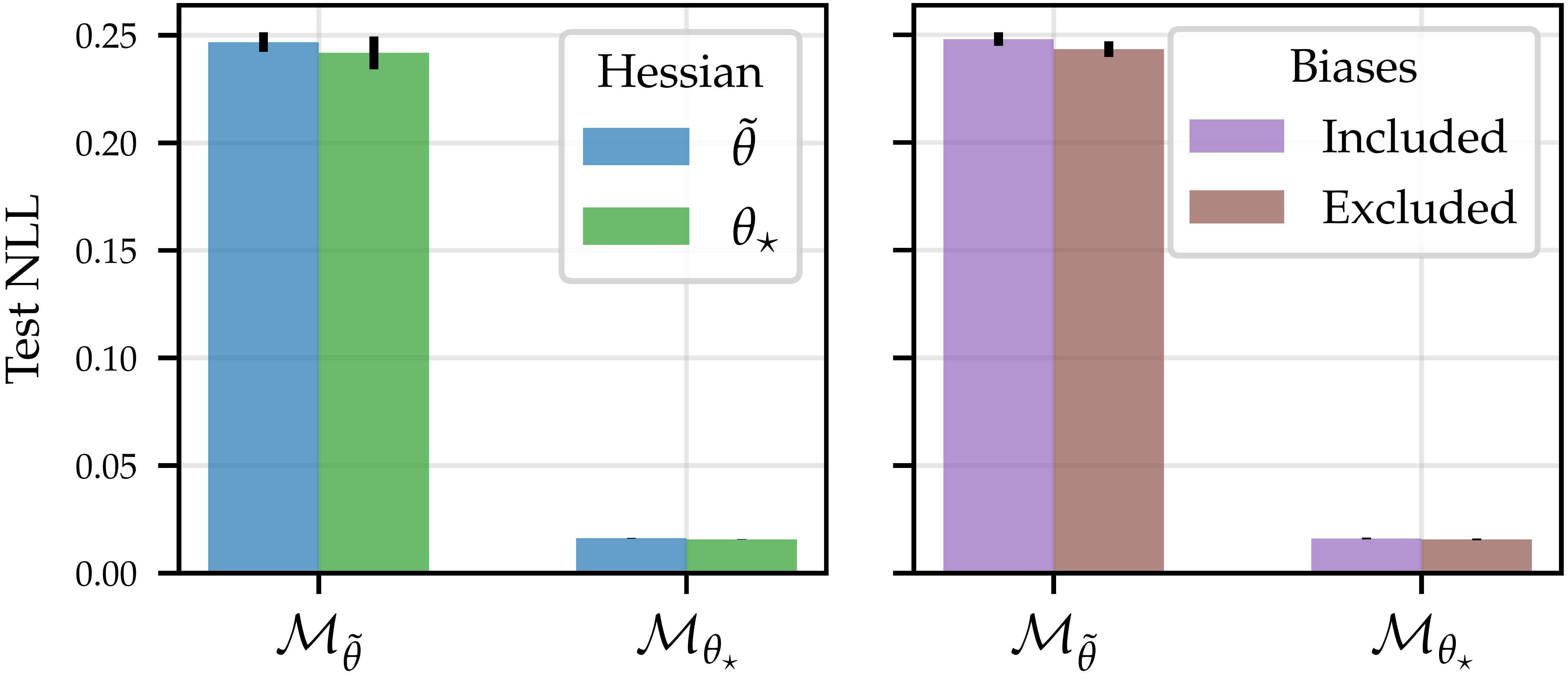}
    \vspace{-0.75cm}
    \caption{Comparison of the test NLL improvement obtained when switching from $\mcM_{\tilde\theta}$ to $\mcM_{\theta_{\star}}$ to optimise the prior precision $\Lambda$ relative to the impact of (\emph{left}) evaluating the Hessian at $\tilde\theta$ or $\theta_{\star}$, and (\emph{right}) excluding network biases from the basis functions. Both plots use a 46k ResNet with batch norm trained on MNIST.}
    \label{fig:hessian_bias_choice}
    \vspace{-0.175cm}
\end{figure}


\begin{table*}[th]
\vspace{-0.10in}
\centering
\caption{Validation of recommendations across architectures. All results are reported as negative log-likelihoods (lower is better). In each column, the best performing method is bolded. For each $\mathcal{M}$,  if single or layerwise $\lambda$ optimisation performs better, it is underlined.}
    \label{tab:model_sweep}
\vspace{0.03in}
\begin{tabular}{llllllll}
\toprule
                 &                     &                    \textsc{Transformer} &                            \textsc{CNN} &                         \textsc{ResNet} &                     \textsc{Pre-ResNet} &                          \textsc{FixUp} & \textsc{U-Net} \\
\midrule
\multirow{2}{*}{$\mcM_{\theta_{\star}}$} & single $\lambda$ &  \textbf{0.162} {\tiny \color{gray} $\pm$ 0.042} &  \textbf{0.025} {\tiny \color{gray} $\pm$ 0.000} &  \textbf{0.017} {\tiny \color{gray} $\pm$ 0.000} &  0.017 {\tiny \color{gray} $\pm$ 0.000} &  \textbf{0.055} {\tiny \color{gray} $\pm$ 0.006} & -1.793 {\tiny \color{gray} $\pm$ 0.050}  \\
                 & layerwise $\lambda$ &  \textbf{0.162} {\tiny \color{gray} $\pm$ 0.042} &  \textbf{0.025} {\tiny \color{gray} $\pm$ 0.000} &  \textbf{0.016} {\tiny \color{gray} $\pm$ 0.001} &  \underline{\textbf{0.016}} {\tiny \color{gray} $\pm$ 0.000} &  0.061 {\tiny \color{gray} $\pm$ 0.005} & \underline{\textbf{-2.240}} {\tiny \color{gray} $\pm$ 0.027}\\
\cmidrule{1-8}
\multirow{2}{*}{$\mcM_{\tilde\theta}$} & single $\lambda$ &  0.310 {\tiny \color{gray} $\pm$ 0.060} &  0.253 {\tiny \color{gray} $\pm$ 0.001} &  0.252 {\tiny \color{gray} $\pm$ 0.006} &  \underline{0.220} {\tiny \color{gray} $\pm$ 0.004} &  \underline{0.153} {\tiny \color{gray} $\pm$ 0.021} & -1.164 {\tiny \color{gray} $\pm$ 0.052}\\
                 & layerwise $\lambda$ &  \underline{\textbf{0.162}} {\tiny \color{gray} $\pm$ 0.042} &  \underline{0.205} {\tiny \color{gray} $\pm$ 0.002} &  \underline{0.236} {\tiny \color{gray} $\pm$ 0.005} &  0.239 {\tiny \color{gray} $\pm$ 0.004} &  0.200 {\tiny \color{gray} $\pm$ 0.018} & \underline{-1.703} {\tiny \color{gray} $\pm$ 0.023} \\
\bottomrule
\end{tabular}
\vspace{-0.1in}
\end{table*}

\textbf{Dependence on $k$ for isotropic precisions } In \cref{fig:k_scan}, we illustrate the dependence of the predictive posterior on the scale of normalised weights $k$ for an isotropic prior precision $\Lambda {=}\,\lambda I$, i.e. \cref{rec:factorised_prior} is ignored. We use a 2.6k parameter 2 hidden layer fully connected NN with layer norm after every layer except the last and a 1d regression task.
Changing $k$ changes the optimal $\lambda$ and, consequently, the predictive uncertainty changes.
With layer-wise $\lambda$ this effect vanishes (as predicted by \cref{proposition:unique-posterior}).


\textbf{Treatment of NN biases }
 Excluding the Jacobians of network biases from our basis function expansion breaks the scaling properties presented in \cref{lemma:scaling}. In \cref{fig:hessian_bias_choice} (right), we show that the effect of excluding biases is dominated by the choice of the model evidence between $\mcM_{\theta_{\star}}$ and $\mcM_{\tilde\theta}$.
 
\textbf{Early stopping }
We evaluate whether more thorough optimisation of the NN weights with $\mcL_{f}$ leads to a linearisation point $\tilde\theta$ closer to $\theta_{\star}$ in the sense of the implied optimal regularisation and induced posterior predictive distribution. We perform this analysis on normalised and unnormalised networks (which use the non scale-invariant FixUp regularisation instead \citep{Zhang19FixUp}),
since $\tilde\theta$ is guaranteed to never match $\theta_{\star}$ for the former.
Surprisingly, the Wasserstein-2 distance between predictive distributions obtained with $\mcM_{\tilde \theta}$ and $\mcM_{\theta_{\star}}$ increases with more optimisation steps in both cases. Thus, more thorough optimisation does not~help.

\begin{figure}[htbp]
\vspace{0.15cm}
    \centering
    \includegraphics{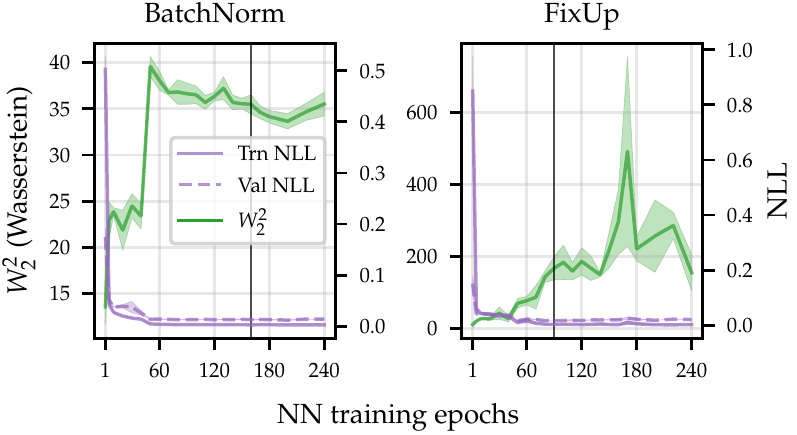}
    \vspace{-0.2cm}
    \caption{Wasserstein distance between predictive posteriors obtained when using $\mcM_{\tilde\theta}$ and $\mcM_{\theta_{\star}}$ at multiple NN training stages. The vertical black line indicates optimal (val-based) early stopping.}
    \label{fig:early_stopping}
    \vspace{-0.35cm}
\end{figure}




\subsection{Validating recommendations across architectures}\label{subsec:architectures}

We evaluate the utility of \cref{rec:linear-weights}, and \cref{rec:factorised_prior} on a range of architectures and tasks:  1. a transformer architecture on the pointcloud-MNIST variable length sequence classification task. This model uses layer norm in alternating layers. 2. a LeNet-style CNN with batch norm placed after every convolutional layer and a dense output layer tasked with MNIST classification. 3. a ResNet with batch norm after every layer except the dense output layer (MNIST classification). 4. the same ResNet but with batch norm substituted by (non scale-invariant) FixUp regularisation (MNIST classification). 5. a pre-ResNet \citep{preresnet}. This architecture differs from ResNet in that batch norm is placed before each weight layer instead of after them; the implication is that there is only 1 normalised group of weights encompassing all weights but those of the dense output layer (MNIST classification). 6. a fully convolutional U-net autoencoder tasked with tomographic reconstruction (regression) of a KMNIST character from a noisy low-dimensional observation. We reproduce the setting of \citet{barbano2021dip}. Group norm is placed after every layer except the last, which is convolutional. 

As shown in \cref{tab:model_sweep}, the application of \cref{rec:linear-weights} yields notably improved performance across all settings. Applying \cref{rec:factorised_prior} yields modest improvements for classification networks with normalisation layers but large improvements for the U-net. Interestingly, layer-wise regularisation degrades performance in the FixUp ResNet.

\begin{table}[th]
\vspace{-0.2cm}
\centering
\caption{Test negative log-likelihoods for ResNet-50 on CIFAR10.}
\label{tab:resnet50_cifar}
\resizebox{\columnwidth}{!}{%
\begin{tabular}{llll}
\toprule
 &  & \textsc{batch norm} & \textsc{FixUp} \\ \midrule
\multirow{2}{*}{$\mcM_{\theta_{\star, \text{simple}}}$} & single $\lambda$ & \textbf{0.112} {\tiny \color{gray} $\pm$ 0.004} & 0.128 {\tiny \color{gray} $\pm$ 0.000} \\
 & layerwise $\lambda$ & \textbf{0.109} {\tiny \color{gray} $\pm$ 0.003} & \underline{\textbf{0.096}} {\tiny \color{gray} $\pm$ 0.000} \\
\midrule       \multirow{2}{*}{$\mcM_{\theta_{\star}}$} & single $\lambda$ & 0.190 {\tiny \color{gray} $\pm$ 0.005} & 0.249 {\tiny \color{gray} $\pm$ 0.002} \\
 & layerwise $\lambda$ & 0.194 {\tiny \color{gray} $\pm$ 0.009} & \underline{0.193} {\tiny \color{gray} $\pm$ 0.001} \\
\midrule
\multirow{2}{*}{$\mcM_{\tilde\theta}$} & single $\lambda$ & 0.570 {\tiny \color{gray} $\pm$ 0.004} & 0.412 {\tiny \color{gray} $\pm$ 0.000} \\
 & layerwise $\lambda$ & 0.567 {\tiny \color{gray} $\pm$ 0.004} & \underline{0.360} {\tiny \color{gray} $\pm$ 0.000} \\ \bottomrule
\end{tabular}%
}
\vspace{-0.2cm}
\end{table}

\subsection{Large scale models}\label{subsec:large_models}

We validate our recommendations on the 23M parameter ResNet-50 network trained on the CIFAR10 dataset. This model places batch norm after every layer except the dense output layer. We also consider a normalisation-free FixUp ResNet-50. 
\Cref{tab:resnet50_cifar} shows that both of our recommendations yield better test NLL, with larger gains obtained by the batch norm network.
The normalisation-free FixUp setting does not simplify as in eq.~\cref{eq:simple_linear_model}. Nonetheless, assuming the simplified model evidence, denoted $\mcM_{\theta_{\star, \text{simple}}}$, when obtaining the linear model optima yields improved performance for all models. We expand on this in \cref{appendix:additional_results_arq}.

\section{Discussion}
We have identified and addressed two pitfalls of a na\"ive application of linearised Laplace to modern NNs. First, the optima of the loss function is not found in practice. This invalidates the assumption that the point at which we linearise our model is stationary. However, every linearisation point implies an associated basis function linear model. As we use this model to provide errobars, we propose to choose hyperparameters using the evidence of this model. This requires only the solving of a convex optimisation problem, one much simpler than NN optimisation. Second, normalisation layers introduce an invariance to the scale of NN weights and thus the linearisation point can only be identified up to a scaling factor. We show that to obtain a predictive posterior that is invariant to this scaling factor, the regulariser must be independently parametrised for each normalised group of weights, e.g. different layers. 
Our experiments confirm the effectiveness of these recommendations across a wide range of model architectures and sizes.

\clearpage
\section*{Acknowledgements}

The authors would like to thank Marine Schimel for helpful discussions. JA acknowledges support from Microsoft Research, through its PhD Scholarship Programme, and from the EPSRC.
RB acknowledges support from the i4health PhD studentship
(UK EPSRC EP/S021930/1), and from The Alan Turing Institute (UK EPSRC EP/N510129/1).
ED acknowledges funding from the EPSRC and Qualcomm. JUA acknowledges funding from the EPSRC, the Michael E. Fisher Studentship in Machine Learning, and the Qualcomm Innovation Fellowship. This work has been performed using resources provided by the Cambridge Tier-2 system operated by the University of Cambridge Research Computing Service (http://www.hpc.cam.ac.uk) funded by EPSRC Tier-2 capital grant EP/T022159/1.

\bibliography{references}
\bibliographystyle{icml2022}

\newpage
\appendix
\onecolumn

\section{Additional discussion of related work}

We commence by providing an overview of previous work on the Laplace approximation for neural networks while discussing its relation to the present paper. We then delve into alternative approximate Bayesian inference methods for neural networks.

\subsection{The linearised Laplace approximation for neural networks}

The seminal work on Bayesian inference for NNs of \citet{Mackay1992Thesis} approximated the intractable posterior over the neural network weight space using the Laplace approximation. 
The so-called ``Evidence Framework'' laid out by \citet{Mackay1992Thesis} consists of an EM-style iterative algorithm where a NN is fit and Laplace approximated (E-step), and then this posterior approximation is used to update hyperparameters (M-step) such that the Laplace marginal likelihood estimate is maximised. The author further employed a local linearisation of the NN to obtain a closed form approximation of the predictive posterior. However, \citet{Mackay1992Thesis} estimated the posterior's curvature using the full Hessian, while mentioning that the Generalised Gauss Newton matrix (GGN), which corresponds to the curvature of the surrogate linear model's loss, is a computationally cheaper alternative. Since then, the GGN approximation has become ubiquitous \citep{bishop2006pattern}.

The local linearisation employed by \citet{Mackay1992Thesis} for estimating the predictive posterior was seen as an approximation chosen for convenience. 
With this in mind, \citet{Lawrence_thesis} forgoed the linearisation step and used Monte Carlo sampling to marginalise the distribution over network weights. This resulted in very poor performance. \citet{ritter2018scalable} also employed sampling in their work. Similarly to \citet{Lawrence_thesis}, these authors found that the Laplace approximation can place large amounts of probability mass in low density regions of the posterior, yielding poor results. They circumvented this issues by damping their posterior covariance estimate, making it closer to a point mass at the mode.

With the proliferation of larger neural networks with higher dimensional weight spaces, computation of the Hessian became intractably expensive and the Laplace approximation fell out of favour relative to other approximate inference methods discussed in \cref{appendix:non_laplace_BNNs}. The work of \citet{ritter2018scalable} was the first to break this trend by proposing the use of the the Kronecker factorised (KFAC) approximation \citep{Martens15kron,botev2017Optimisation} to approximate the NN Generalised Gauss Newton (GGN) curvature matrix, making Laplace's method scalable to modern architectures.

Building upon the above work, \citet{Khan2019Approximate} made the observation that the GGN posterior precision approximation coincides with the exact posterior of the locally linearised neural network, when seen as a model in its own right. \citet{Khan2019Approximate} also suggested the use of the marginal likelihood of this surrogate linear model as a proxy for that of the neural network. Following this, \citet{foong2019inbetween,Immer2021Improving} made the observation that assuming linearisation at prediction time significantly improves performance. 
The latter work \citep{Immer2021Improving} expanded on this topic by discussing the view of the GGN Laplace approximation as a model switch into a generalised linear model. The authors note that the linearisation point may not be the optima of the NN loss in practise and that the Laplace distribution does not yield the exact posterior for the linear model when employing non-Gaussian likelihoods e.g. in the classification setting. Motivated by this, \citet{Immer2021Improving} proposed to ``refine'' the linear model's posterior, which they did using variational inference. This provided some improvements in terms of predictive performance. Our \cref{rec:linear-weights} can be interpreted as suggesting a form of posterior refinement where the posterior covariance is kept fixed. 
In \cref{sec:normalised-networks}, we show that this refinement applied jointly with prior precision refinement is in fact necessary to obtain a well-defined posterior predictive distribution when applying linearised Laplace to networks with normalisation layers.

\citet{Kristiadi2020being} showed that reliable uncertainty estimates can be obtained by only applying the Laplace approximation to the last layer's weights.
\citet{daxberger2021bayesian} avoided the need for KFAC approximations by only considering the curvature of a non-last-layer axis-aligned subspace of the weights. Although both of these methods are scalable to large models, they give a deterministic treatment to most network parameters, which eliminates the possibility of model selection.

Building upon the work of \citet{Khan2019Approximate}, \citet{Immer2021Marginal} demonstrated the application of the linearised Laplace model evidence for hyperparameter and architecture selection in neural networks. Their proposed method differs from the one studied in this paper however, in that \citep{Immer2021Marginal} apply an online approach; model hyperparameters are learnt simultaneously with network weights. The M step is performed every few epochs of NN training, using the unconverged neural network weights in the model evidence estimate. Although this objective is clearly biased, the use of a different Jacobian basis expansion at each M step brings the algorithm of \citet{Immer2021Marginal} outside of the scope of the study of the present work.
Be that as it may, the authors note that their method does not work well in the presence of normalisation layers. This motivates them to use FixUp regularisation \citep{Zhang19FixUp} instead. Our work finds that FixUp only partially ameliorates the degradation in model selection capabilities brought by using the norm of the linearisation point in the model selection objective. Also related is the work of \citet{maddox2020rethinking}, who built upon the analysis of \citet{Mackay1992Thesis} by showing that the Hessian eigenspectrum can be used to predict the generalisation properties of modern neural networks.

\subsection{Approximate inference and uncertainty estimation with neural networks}\label{appendix:non_laplace_BNNs}

As a more accurate alternative to the Laplace approximation, \citet{neal1995bayesian} introduced Hamiltonian Monte Carlo. Although asymptotically unbiased, this method requires a large number of full-batch gradient evaluations in order to draw independent samples. As a result, the cost of running Hamiltonian Monte Carlo in the modern large-network large-dataset setting is prohibitive.  It is worth mentioning the work of \citet{izmailov2021what} who leverage vast computational resources to demonstrate the application of Hamiltonian Monte Carlo in the modern setting. Unlike the Laplace approximation, sampling methods do not provide straight forward access to marginal likelihood estimates. \citet{Neal2001Annealed} introduced annealed importance sampling for this purpose, but its reliance on Hamiltonian Monte Carlo precludes its practical application to modern neural networks. Stochastic gradient sampling methods \citep{Welling2011SGLD,Chen2014SGHMC,wenzel2020good} relax the full batch requirement of the above methods but can be heavily biased as a result \citep{Betancourt2015Incompatibility}.

Another class of approximate inference methods for neural networks are those that employ variational inference \citep{hinton1993keeping}. These re-formulate inference of the Bayesian posterior as the optimisation of the parameters of a variational distribution \citep{graves2011practical,blundell2015weight,Osawa2019practical,Dusenberry2020rank}. The objective being optimised is known as the Evidence Lower BOund (ELBO) and, in principle, it can be used as a surrogate objective for model selection. However, for neural network models, looseness in the constructed ELBOs has historically prevented these objectives from being used for this purpose. Two recent notable exceptions are the work on tighter variational bounds for NNs of \citet{Tomczak2021Collapsed} and \citet{ober2021global}. However, their performance for model selection has yet to be evaluated extensively. 
Closely related to variational methods are those that use expectation propagation \citep{hernandez2015probabilistic} and alpha divergence minimisation \citep{Lobato2016black}.

Finally, we note that although uncertainty estimation is the main goal of Bayesian deep learning, the most successful methods for uncertainty estimation with deep networks have traditionally been based on ensembling \citep{lakshminarayanan2017, ashukha2020pitfalls,snoek2019can,Dusenberry2020rank}. The linearised Laplace method is unique in being one of the only Bayesian deep learning methods which performs competitively with ensembling methods \citep{daxberger2021bayesian}.

\nocite{Bhatt2021transparency}
\nocite{antoran2021clue}
\nocite{antoran2019disentangling}

\section{Example to illustrate \cref{def:normalised_networks}} \label{appendix:definition_example}

We now provide an example to illustrate how \cref{def:normalised_networks} constructs the parameter space as a direct sum of the subsapces of normalised and non-normalised weights.

Consider an MLP $f: \Theta \times \mathcal{X} \to \mathcal{Y}$ with a single input dimension $\mathcal{X} = \mathbb{R}$, single output dimension $\mathcal{Y} = \mathbb{R}$, single hidden layer and 2 hidden units. We apply layer norm after the input layer parameters. The model parameters are $\theta = [\theta_{1}, \theta_{2}, \theta_{3}, \theta_{4}] \in \Theta$ with $\theta_{1}, \theta_{2}$ belonging to the input layer and $\theta_{3}, \theta_{4}$ to the readout layer.  We assume there are no biases without loss of generality. 

Denoting the outputs of the first parameter layer $a = [\theta_{1}x, \theta_{2}x]$, layer norm applies the function 
\begin{gather*}
   \frac{ a - \mathbb{E}[a]}{\sqrt{\text{Var}(a)}} \gamma + \beta \spaced{with} \mathbb{E}[a] = 0.5 \theta_{1}x + 0.5 \theta_{2}x  \spaced{and} \text{Var}(a) = 0.5 (\theta_{1}x - \mathbb{E}[a])^2 + 0.5 (\theta_{2}x - \mathbb{E}[a])^2
\end{gather*}
for $\gamma \in \mathbb{R},\, \beta \in \mathbb{R}$. Now take $k \in \mathbb{R}_{+}$ to see that the output of the layernorm layer is invariant to scaling the input layer parameters by $k$
\begin{gather*}
   \frac{ ka - \mathbb{E}[ka]}{\sqrt{\text{Var}(ka)}}  = 
   \frac{k (a - \mathbb{E}[a])}{k\sqrt{\text{Var}(a)}} 
 =
   \frac{ a - \mathbb{E}[a]}{\sqrt{\text{Var}(a)}}.
\end{gather*}
Thus, we have $f([\theta_{1}, \theta_{2}, \theta_{3}, \theta_{4}], \cdot) = f([k\theta_{1}, k\theta_{2}, \theta_{3}, \theta_{4}], \cdot)$. Now let $\Theta$ be the result of the internal \underline{\emph{direct sum}} $\Theta = \Theta' \oplus \Theta''$, and $\theta', \, \theta''$ be projections of $\theta$ onto the subspaces $\Theta' \,\&\, \Theta''$, respectively, so that $\theta = \theta' + \theta''$. The operator $+$ is defined as the vector sum, as usual. In the simplest case where $\Theta'$ and $\Theta''$ are aligned with the standard basis, this corresponds to  vectors in $\Theta'$ having zero valued entries in the place of parameters to which normalisation is applied, $\theta' = [0, 0, \theta_{3}, \theta_{4}]$. Vectors in $\Theta''$ are non-zero for normalised parameters, $\theta'' = [\theta_{1}, \theta_{2}, 0, 0]$.
Finally, we write the property of interest
\begin{gather*}
    f(\theta' + k \theta'', \cdot) = f(\theta' +  \theta'', \cdot).
\end{gather*}

\section{Additional proofs}

We use notation defined in \cref{sec:proofs} of the main text. Additionally, we denote by $D_x f(y)$ the directional derivative of the function $f$ in the direction of $x$, evaluated at the point $y$. This is the projection of the gradient of the function $f$ onto the vector $x$, given by
\begin{equation}
    D_x f(y) = \partial_z [f(z)](y) \cdot x.
\end{equation}

\subsection{Proof of \cref{lemma:scaling}}\label{appendix:scaling-proof}

\begin{proof} 
First, we consider $J'_k$ and $J''_k$. For $J'_k$, take any $\theta' \in \Theta'$ and consider the directional derivative $D_{\theta'} f(\tilde\theta' + k\tilde\theta'')$. From the limit definition,
\begin{align*}
    D_{\theta'} f(\tilde\theta' + k\tilde\theta'') 
    = \lim_{\delta \downarrow 0} \frac{1}{\delta}\left[f (\tilde\theta' + \delta\theta' + k\tilde\theta'' ), \cdot) - f (\tilde\theta' + k\tilde\theta''), \cdot) \right] 
    &= \lim_{\delta \downarrow 0} \frac{1}{\delta}\left[f (\tilde\theta' + \delta\theta' + \tilde\theta'' ), \cdot) - f (\tilde\theta' + \tilde\theta''), \cdot) \right] \\
    &= D_{\theta'} f(\tilde\theta' + \tilde\theta'').
\end{align*}

From the Jacobian-product definition, we have $J'_k \cdot \theta' = J' \cdot \theta'$. Since $\theta' \in \Theta'$ was arbitrary and we are working on a finite-dimensional Euclidean space, this shows $J'_k = J'$. For $J''_k$, consider $D_{\theta''} f(\tilde\theta' + k\tilde\theta'')$ for $\theta'' \in \Theta''$ arbitrary. We have
\begin{align*}
    D_{\theta''} f(\tilde\theta' + k\tilde\theta'') 
    &= \lim_{\delta \downarrow 0} \frac{1}{\delta}\left[f (\tilde\theta' + k\tilde\theta'' + \delta\theta'' ), \cdot) - f (\tilde\theta' + k\tilde\theta''), \cdot) \right] 
    = \lim_{\delta \downarrow 0} \frac{1}{\delta}\left[f (\tilde\theta' + \tilde\theta'' + \frac{\delta}{k} \theta''), \cdot) - f (\tilde\theta' + \tilde\theta''), \cdot) \right] \\
    &= \frac{1}{k}\lim_{\delta' \downarrow 0} \frac{1}{\delta'}\left[f (\tilde\theta' + \tilde\theta'' + \delta' \theta''), \cdot) - f (\tilde\theta' + \tilde\theta''), \cdot) \right] 
    = \frac{1}{k}D_{\theta''} f(\tilde\theta' + \tilde\theta'').
\end{align*}
Repeating the same argument as for $J'_k$, we obtain $J''_k = \frac{1}{k} J''$.

Now, we look at the scaling of $h_k$. By definition, using that $f$ is normalised and the previously derived scaling for $J_k$, 
\begin{align*}
    h_k(\theta' + k\theta'', \cdot) &= f(\tilde\theta' + k\tilde\theta'', \cdot) + J'_k (\theta' -\tilde\theta') + J''_k (k\theta'' - k\tilde\theta'')\\
    &= f(\tilde\theta' + \tilde\theta'', \cdot) + J'(\theta' -\tilde\theta') + J''(\theta'' -\tilde\theta'') \\
    &= h(\theta' + \theta'', \cdot),
\end{align*}
which is the claimed result.

For $H_k$, we examine it entry-wise. We have,
\begin{align*}
    [H_k]_{mn} 
    &= \partial_{\theta_m}\partial_{\theta_n} [L(h_k(\theta, \cdot))](\theta_k) 
    = \partial_{\theta_m} \left( \sum_i \partial_h \ell (h_k(\theta_k, x_i ), y_i) \cdot \partial_{\theta_n} [h(\theta, x_i)](\theta_k) \right) \\
    &= \sum_i \partial_{\theta_m} [h_k(\theta, x_i)](\theta_k) \cdot \partial^2_h \ell(h_k(\theta_k, x_i), y_i) \cdot \partial_{\theta_n} [h_k(\theta, x_i)](\theta_k)
    \\ &\quad\quad\quad+ \sum_j \partial_h \ell(h_k(\theta_k, x_j), y_j) \cdot \partial_{\theta_m} \partial_{\theta_n} [h_k(\theta, x_j)](\theta_k).
\end{align*}
Now since $h_k$ is affine, it has no curvature and thus $\partial_{\theta_m} \partial_{\theta_n} h_k(\theta, x)$ is identically zero for all $x \in \mcX$. With that, the second term in the sum vanishes. For the first sum, consider the \emph{middle term}, the curvature of the negative log-likelihood function, and use $h_k(\theta_k, \cdot) = h(\theta, \cdot)$ to see that it is invariant to $k$. Finally, note that $\partial_{\theta_m} h_k$ and $\partial_{\theta_n} h_k$ are entries of $J_k$ and inherit scaling from therein. Specifically, if both $\theta_m$ and $\theta_n$ belong to $\Theta''$, we obtain $1/k^2$ scaling; if just one belongs to $\Theta''$, we get $1/k$ scaling, and otherwise we obtain constant scaling. This completes the result for $H_k$.
\end{proof}

\subsection{Proof of equation~\cref{eq:directional-deriv-zero}}\label{appendix:proof_directional_derivative}

Consider a normalised network $f$, the linearisation point $\tilde \theta' + \tilde \theta''$,  and the directional derivative with respect to parameters $\tilde \theta''$ in the direction of $\tilde\theta''$, denoted $D_{\tilde\theta''} f(\tilde\theta' + \tilde\theta'')$. On one hand, this is just the projection of the partial derivative of $f$ with respect to $\theta''$ evaluated at $\tilde \theta''$ and projected onto $\tilde\theta$, and thus $D_{\tilde\theta''} f(\tilde\theta' + k\tilde\theta'') = \partial_{\theta''} f(\tilde \theta, \cdot) \cdot \tilde \theta$. On the other hand, from the limit definition of the directional derivative,
\begin{align*}
    D_{\tilde\theta''} f(\tilde\theta' + k\tilde\theta'') 
    = \lim_{\delta \downarrow 0} \frac{1}{\delta}\left[f (\tilde\theta' + \delta \tilde \theta'' + k\tilde\theta'' ), \cdot) - f (\tilde\theta' + k\tilde\theta''), \cdot) \right] 
    &= \lim_{\delta \downarrow 0} \frac{1}{\delta}\left[f (\tilde\theta' + (\delta+k) \: \tilde\theta'' ), \cdot) - f (\tilde\theta' + k \tilde\theta''), \cdot) \right] \\
    &=0,
\end{align*}
and thus $\partial_{\theta''} f(\tilde \theta, \cdot) \cdot \tilde \theta = 0$. 

The quantity appearing in equation~\cref{eq:directional-deriv-zero} is $\partial_{\theta''_{d:}} \phi(\tilde \theta''_{d:}, \cdot) \cdot \tilde \theta''_{d:}$. We now observe that for a fully normalised network, each of the outputs of the penultimate layer $[\phi(\theta_{d:}, \cdot)]_{i}$, with the output dimension being indexed by $i$, is a normalised network (\cref{def:normalised_networks}) in its own right. Hence, we can apply the same reasoning as above to see that $\partial_{\theta''_{d:}} [\phi(\tilde \theta''_{d:}, \cdot)]_{i} \cdot \tilde \theta''_{d:} = 0$.

\section{Algorithm for optimisation of the linear model loss}\label{app:algorithm}

We now discuss the optimisation of the loss for the predictor $h(v, \cdot) = J(\cdot) \cdot v$ where $v \in \Theta$ is the linear model's parameter vector. We note that the procedure for the Taylor expanded model $f(\tilde \theta, \cdot) + J(\cdot) \cdot (v - \tilde \theta)$ is analogous. We denote NN Jacobians as $J(\cdot) = \partial_\theta f(\tilde\theta, \cdot)$ and $d_{y}$ is the output dimensionality $|\mcY|$.

We wish to optimise $v$ according to the objective $\mathcal{L}_{h, \Lambda}(v) = L(h(v, \cdot)) + \|v\|^{2}_{\Lambda}$. We adopt a first order gradient-based approach. We first consider the gradient of $L(h(v, \cdot)) = \sum_{i}\ell( J(x_{i})\cdot v,  y_{i})$. Using the chain rule and evaluating at an arbitrary $\bar v \in \Theta$ we have
\begin{gather*}
    \partial_{v} [L(h(v, \cdot))](\bar v) = 
    \sum_i \partial_{\hat{y}} [\ell(\hat{y}, y)](J(x)\cdot \bar v) \cdot \partial_v (J(x_{i})\cdot \bar v) = \sum_i \partial_{\hat{y}} [\ell(\hat{y}, y)](J(x)\cdot \bar v) \cdot J(x_{i}).
\end{gather*}
Evaluating the affine function $h$ consists of computing the Jacobian vector product $J(x_i) v$. 
This can be done while avoiding computing the Jacobian explicitly by using forward mode automatic differentiation or finite differences.
We find both approaches to work similarly well, with finite differences being slightly faster, and forward mode automatic differentiation more numerically stable.
Our experiments use finite differences, so we present this approach here. Specifically, we employ the method of \citet{ANDREI2009Accelerated} to select the optimal step size. 
We then evaluate the loss gradient at the linear model output, denoting this vector in our algorithm as 
$ g = \partial_{\hat{y}} [\ell(\hat{y}, y)](J(x)\cdot \bar v)$. This gradient can often be evaluated in closed form. Finally, we project $g$ onto the weights by multiplying with the Jacobian. This vector Jacobian product is implemented using automatic differentiation. That is, $g^{T} J(x_{i})  =\partial_{\theta} [g^{T} \cdot f(\theta, x_{i})](\tilde \theta)$. We combine these steps in \cref{alg:linear_MAP_algorithm}.

Evaluating the gradient of $\|v\|^{2}_{\Lambda}$ is trivial. 

\begin{algorithm2e}[H]
	\SetAlgoLined
	\DontPrintSemicolon
	\KwData{Neural network $f$, Observation $x$,  Linearisation point $\tilde \theta$, Weights to optimise $v$, Likelihood function $\ell(\cdot, y)$, Machine precision $\epsilon$}
	$\delta = \sqrt{\epsilon} (1+\norm{\tilde \theta}_{\infty}) / \norm{v}_{\infty}$ \tcp*{Set FD stepsize \citep{ANDREI2009Accelerated}}
	$\hat{y} = J(x) \cdot v   \approx \frac{f(x, \tilde \theta + \delta v ) - f(x, \tilde \theta - \delta v ) }{2 \delta}$ \tcp*{Two sided FD approximation to Jvp}
	$g = \partial_{\hat{y}} [\ell(\hat{y}, y)](J(x)\cdot v)$ \tcp*{Evaluate gradient of loss at $J(x) \cdot v$ }
	$g^{T} \cdot J(x) = \partial_{\theta} [g^{T} \cdot f(\theta, x)](\tilde \theta) $ \tcp*{Project gradient with backward mode AD}
\KwResult{$g^{T} \cdot J(x)$}
\caption{Efficient evaluation of the likelihood gradient for the linearised model}
\label{alg:linear_MAP_algorithm}
\end{algorithm2e}

\section{Additional discussion of recommendations and results}\label{appendix:discussion}

Here, we expand on the motivation behind the recommendations made in the main text and the implications of our results.

\paragraph{Motivation for evaluating the log-likelihood Hessian at $\tilde\theta$ instead of $\theta_{\star}$}

In \cref{sec:adapt-laplace}, we suggest evaluating the curvature of the linear model's loss function only once and around the linearisation point $\tilde\theta$. This yields $H {=} \partial^2_\theta [L(h(\theta, \cdot))](\tilde\theta)$.

The more accurate alternative is to re-evaluate $\partial^2_\theta [L(h(\theta, \cdot))]$ at each optima of the linear model parameters found during the iterative procedure for computing $(\theta_\star, \Lambda_\star)$. This would require re-estimating the Hessian as many times as EM steps we perform.

However, we have good reason to believe that both approaches will yield similar results. The curvature of $L(h(\theta, \cdot))$ only depends on the linear model weights through the linear model's outputs, and the targets for likelihood functions outside of the exponential family. We assume our neural network model to be both flexible enough to fit the targets well and trained to do so. Furthermore, if we have a dense final layer, then the neural network's predictions $f(\tilde \theta, \cdot)$ are contained within the span of the linear model's basis expansion, as discussed in \cref{sec:dense-final-layer}. Then, we can the expect the mode of the linear model's posterior to also fit the training data well and thus provide similar predictions to the neural network. If both models provide similar predictions, then the curvature of $L(h(\theta, \cdot))$ at $\tilde \theta$ and $\theta_{\star}$ will be similar.

\paragraph{Implications of the simplification of the linear model for linearised networks \cref{eq:simple_linear_model}}

Both the Taylor expanded affine model presented in equation~\cref{eq:h-def} and the simplified linear model from equation~\cref{eq:simple_linear_model} employ the same Jacobian basis expansion. However, their posterior modes $\theta_{\star}$, and thus optimal regularisation parameters $\Lambda_{\star}$, are different. 
Specifically, the constant-in-$\theta$ offset present in the Taylor expanded model pushes its posterior mean closer to the linearisation point $\tilde \theta$. As discussed in \cref{sec:dense-final-layer}, scale invariant layers negate the effect of this offset. For fully normalised networks with a dense final layer, the offset is eliminated entirely and we recover the simplified linear model.

Accordingly, in our experiments, we find the simplified linear model's posterior mode to be further from the linearisation point than the Taylor expanded model's posterior mode. Thus, using the na\"ive linearised Laplace model evidence $\mcM_{\tilde \theta}$ results in poorer performance when using simplified linear model, i.e. when using normalisation layers. Even more interesting, however, is that employing the posterior mean of the simplified linear model in the model evidence estimate results in improved performance even when normalisation layers are not present, as shown in \cref{sec:experiments} and \cref{appendix:additional_results_arq}.

\paragraph{On the use of FixUp instead of scale-invariant normalisation}

FixUp regularisation is used by \citet{Immer2021Marginal} to avoid the need for normalisation layers in residual networks. FixUp specifies a network parameter initialisation scheme and adds a multiplicative scalar, and bias, at the output of each residual block. This multiplicative scalar introduces the following invariance; for a scaling of the weights in the residual block by $k > 0$ (applied as in \cref{sec:normalised-networks})  and a downscaling of the FixUp multiplicative scale factor by $\nicefrac{1}{k}$, the neural network output remains unchanged. We may employ methods analogous to the ones presented in the main text to study this scenario. Although this study is beyond the scope of this work, we make two observations.

\begin{enumerate}
    \item Unlike when employing normalisation layers, for a positive definite regulariser $\Lambda$, the invariance introduced by FixUp does not preclude the existence of a posterior mode nor does it result in the cancellation of terms in the affine Taylor expanded model \cref{eq:h-def} needed to recover the simplified linear model \cref{eq:simple_linear_model}. As a result, we expect the posterior mode of the linearised model for FixUp networks to be closer to the linearisation point $\tilde \theta$ than for normalised networks. In turn, a  na\"ive implementation of linearised Laplace will perform less poorly for FixUp networks than for normalised networks. Be that as it may, our results in \cref{sec:experiments} show there is still much to be gained from adopting \cref{rec:linear-weights}.
    
    \item Adept readers may note that a similar invariance is present in all neural networks with ReLU non-linearities; we can upscale the weights of a layer while downscaling the weights of any other layer and the neural network function is preserved.  When using a positive definite regulariser $\Lambda$, this invariance does not present a large problem for the Laplace method in practise. FixUp is unique, however, in that the multiplicative parameter is one dimensional. Since the dimensionality of the parameters belonging to the preceding weight layer is likely to be much larger than one, the regularisation term in the neural network loss function $\mcL_{f, \Lambda}$ biases the parameters towards solutions where the multiplicative scaling factor is large while other network parameters take values close to 0. This bias can be understood as an implicit choice of scaling factor $k$, defined as in \cref{sec:normalised-networks}, with value depending on the size of the weight layers in residual blocks.  
\end{enumerate}

\paragraph{On batch norm} 

Batch norm applies normalisation at the level of activations \citep{ioffe2015batch}. That is, each activation to which batch norm is applied is normalised using statistics about said activation computed across a batch of data. This is distinct from other approaches to normalisation, such as group norm and layer norm, which compute statistics across network activations but separately for each individual observation.   Consequently, batch norm introduces two peculiarities that are not shared by other normalisation layers.
\begin{enumerate}
	\item When using batch norm, the parameters feeding into each activation are normalised separately. As a result, \cref{rec:factorised_prior} prescribes regularising the weights feeding into each activation separately. However, empirically, we  find that doing this provides no benefit over defining per-layer regularisation parameters. We hypothesize this is due to the scaling constant $k$ (defined as in \cref{sec:normalised-networks}) being similar for weights corresponding to the same layers, perhaps due to the gradients reaching the different weights in each layer having similar variance \citep{Laarhoven2017L2}. Thus, our experiments with batch norm implement layer-wise regularisation instead of activation-wise regularisation.
	
	\item In the presence of batch normalisation, including network biases' gradients into the Jacobian feature expansion is not necessary in order to preserve the invariances described in \cref{sec:normalised-networks}. Because network biases represent a scalar offset to network activations, their effect is completely negated by batch normalisation. Indeed, for this reason, biases are most commonly omitted from weight layers followed by batch norm \citep{he2016deep}. 
\end{enumerate}



\paragraph{On the use of weight decay and data augmentation for neural network training}

A common concern when performing probabilistic reasoning with neural networks is whether the regularisation techniques used for training break the formalisms put in place for the purpose of inference. For instance, data augmentation breaks any assumptions of observations being iid put in place for the speficication of a likelihood function \citep{Nabarro2021augmentation}. Conversely, some of the techniques required to satisfy the probabilistic inference formalism may degrade network training performance. This is the case for weight decay, which appears as the log of a zero-mean Gaussian prior over the network's parameters but can hinder learning in generative models.

The adaption of linearised Laplace presented in \cref{sec:adapt-laplace} of this paper provides a simple solution to the above qualms by decoupling neural network training from probabilistic inference. Applying \cref{rec:linear-weights} removes the need for the linearisation point $\tilde \theta$ to be a stationary point of the neural network posterior distribution. Thus, we are free to employ weight decay, a lack there-off, data augmentation, or any other regularisation schemes. In turn, properties introduced into our neural networks by training with these regularisation schemes will feature in the resulting linear model through the Jacobian basis expansion. 

When performing inference in the linear model, we should take care to use a proper likelihood function and prior however. For this reason, we avoid data augmentation when computing the optima of the linear model loss $\theta_{\star}$ and when computing the linear model Hessian $H$.

\section{Details on experimental setup}\label{app:experimental_setup}

Here, we provide the details of our experimental setup which were omitted from the main text.

\subsection{Experiments with full Hessian computation}\label{app:experimental_setup_small}

This subsection concerns the experiments which use small architectures for which exact Hessian computation is tractable. These experiments are described in \cref{subsec:exp_assumptions} and \cref{subsec:architectures} of the main text. We first describe the setup components shared among architectures and then provide architecture-specific details. We exclude details for the U-net used in \cref{subsec:architectures}. Instead we provide these together with a brief description of the tomographic reconstruction task it performs in \cref{appendix:unet-setup}.

Unless specified otherwise, NN weights $\tilde \theta$ are learnt using SGD, with an initial learning rate of 0.1, momentum of 0.9, and weight decay of $1\times 10^{-4}$. We trained for 90 epochs, using a multi-step LR scheduler with a decay rate of 0.1 applied at epochs 40 and 70. This is a standard  choice for CNNs and is default in the examples provided by \href{https://pytorch.org/vision/stable/models.html}{Pytorch}.

The linear weights $\theta_\star$ are optimised using Adam and with their gradients calculated using \cref{alg:linear_MAP_algorithm}. We use a learning rate of $1\times 10^{-4}$ and train for 100 epochs. We set the initial regularisation parameter to be isotropic $\Lambda = \lambda I$ with $\lambda = 1\times 10^{-4}$.

\subsubsection{CNN}

Our CNN is based on the LeNet architecture with a few variations found in more modern neural networks. The architecture contains 3 convolutional blocks, followed by global average pooling in the spatial dimensions, a flatten operation, and finally a fully-connected layer. The convolutional blocks consist of \textsc{Conv} $\rightarrow$ \textsc{ReLU} $\rightarrow$ \textsc{BatchNorm}. Instead of using max pooling layers, as in the original LeNet variants, we use convolutions with a stride of 2. The first convolution is $5 \times 5$, while the next two are $3 \times 3$. As described in the main text, we consider architectures of 3 different sizes. \cref{tab:lenet_models} shows the number of the filters and number of parameters for each size of this model. The \emph{Big} model's values where chosen to create a model as large as possible while keeping full-covariance Laplace inference tractable on one A100 GPU.

\begin{table}[h]
    \caption{Architecture parameters for the CNNs used in experiments.}
    \label{tab:lenet_models}
    \centering
    \begin{tabular}{cccccc}
    \toprule
         &  \textsc{Conv1 Filters} &  \textsc{Conv2 Filters} &  \textsc{Conv3 Filters} & \textsc{Params.} & \textsc{Hessain Size} \\ \midrule
     Big & 42 & 48 & 60 & 46 024 & 15.68 GB \\
     Med & 32 & 32 & 64 & 29 226 & 6.36 GB \\
     Small & 16 & 32 & 32 & 14 634 & 1.60 GB \\
     \bottomrule
    \end{tabular}
\end{table}

\subsubsection{ResNet, Pre-ResNet, and Biased-ResNet}\label{appendix:ResNet_arq}

Our ResNet is based on our CNN architecture. We replace the second and third convolutional blocks with residual blocks. The main branch of the residual blocks consist of \textsc{Conv} $\rightarrow$  \textsc{BatchNorm} $\rightarrow$ \textsc{ReLU} $\rightarrow$ \textsc{Conv} $\rightarrow$  \textsc{BatchNorm}. We apply a final \textsc{ReLU} layer after the residual is added. All of the convolutions in the residual blocks use the same number of filters. In order to downsample our features between blocks we use $1 \times 1$ convolutions with a stride of 2. \cref{tab:resnet_models} shows the number of the filters and number of parameters used for each size of this model.

Our Pre-ResNet architecture is identical to the ResNet except the main branch cosists of \textsc{BatchNorm} $\rightarrow$ \textsc{ReLU} $\rightarrow$ \textsc{Conv} $\rightarrow$  \textsc{BatchNorm} $\rightarrow$ \textsc{ReLU} $\rightarrow$ \textsc{Conv}, and we do not apply a \textsc{ReLU} after adding the residual.

\begin{table}[h]
    \caption{Architecture parameters for the ResNets used in experiments.}
    \label{tab:resnet_models}
    \centering
    \begin{tabular}{cccccc}
    \toprule
         &  \textsc{Conv1 Filters} &  \textsc{ResBlock 1 Filters} &  \textsc{ResBlock 2 Filters} & \textsc{Params.} & \textsc{Hessain Size} \\ \midrule
     Big & 22 & 42 & 64 & 45 576 & 15.48 GB \\
     Med & 16 & 32 & 64 & 26 874 & 5.38 GB \\
     Small & 12 & 24 & 32 & 14 814 & 1.64 GB \\
     \bottomrule
    \end{tabular}
\end{table}

Note that the standard ResNet architecture does not apply biases in the convolution layers. The only biases in the entire network are placed in the dense output layer. For our experiment where biases are included in the Jacobian feature expansion in \cref{subsec:exp_assumptions}, we modify the ResNet architecture to include biases in all convolutional layers in addition to the already present final dense layer bias. These biases account for a small increase in parameters, to 14 898, 26 986, and 45 726, in the small, medium, and big cases, respectively.

\subsubsection{FixUp ResNet}

Our FixUp-ResNet architecture follows the standard ResNet structure described above, with the additional FixUP offsets and multipliers described in \citep{Zhang19FixUp}. We also follow \citet{Zhang19FixUp} in zero initializing the dense layer, and scaling the convolution weight initalization as a function of the depth of the network.

When training FixUp-ResNets, we use the Adam optimizer with a fixed learning rate of 0.01.

\subsubsection{Transformer}

Our Transformer architecture contains two encoder layers with two attention heads each, and no dropout. Its input is a sequence of tokens, to which we apply a linear embedding. We add a learnable \texttt{class} embedding for each input. This \texttt{class} token is used to classify the input. We do not use positional encoding, preserving permutation invariance in the input. The sizes of the embeddings and the MLP hidden dimensions are provided in \cref{tab:transformer_models}.

When training Transformers, we use the Adam optimiser with a learning rate of $3 \times 10^{-3}$. We use an exponential learning rate decay with a gamma of 0.99 applied after every epoch of training.

\begin{table}[h]
    \caption{Architecture parameters for the Transformers used in experiments.}
    \label{tab:transformer_models}
    \centering
    \begin{tabular}{cccccc}
    \toprule
         &  \textsc{MLP Dim} &  \textsc{Embedding Dim} &  \textsc{Params.} & \textsc{Hessain Size} \\ \midrule
     Big & 120 & 50  & 45 900 & 15.70 GB \\
     Med & 80 & 40  & 27 090 & 5.47 GB \\
     Small & 60 & 30  & 15 520 & 1.79 GB \\
     \bottomrule
    \end{tabular}
\end{table}

\subsection{U-Net tomographic reconstruction of KMNIST digits}\label{appendix:unet-setup}

In this section, we provide an overview of the tomographic reconstruction task for which we report results in \cref{subsec:architectures}. We also report the corresponding experimental details. Our setup almost exactly replicates that of \citet{barbano2021dip} and \citet{antoran2022tomography}, and we refer to these works for a fully detailed description.

Tomographic reconstruction aims to recover an unknown image $z\in\mathbb{R}^{d_{z}}$ from noisy measurements $y\in \mcY$, which are often described by the application of a linear forward operator $\mathrm{A}\in\mathbb{R}^{d_{y}\times d_{z}}$ (in our case $\mathrm{A}$ is the discrete Radon Transform) and the addition of Gaussian noise $\eta\sim\mathcal{N}(0,\sigma^{2}_{y}\,\mathrm{I}_{d_{y}})$ as
\begin{equation}\label{eq:inverse_problem}
    y = \mathrm{A} z + \eta,
\end{equation}
and $|\mcY| \ll d_{z}$.
We replicate the setup of \citet{barbano2021dip,antoran2022tomography}. That is, we 
use 10 test images from the KMNIST dataset, which consists of $28\times 28$ gray-scale images of Hiragana characters \citep{clanuwat2018deep}, we simulate $y$ with $20$ angles taken uniformly from the range $0^\circ$ to $180^\circ$, and add $5\%$ white noise to $\mathrm{A}z$.
We reconstruct $z$ using the Deep Image Prior (DIP) \citep{Ulyanov18prior}, which parametrises the reconstruction $z$ as the output of a U-net $f(\theta)$ \citep{ronneberger2015u}.
The reconstruction loss used to train the U-net reads
\begin{equation*}
     \|\mathrm{A} f(\theta) - y\|^2 + \zeta \mathrm{TV}(f(\theta)),
\end{equation*}
where the hyperparameter $\zeta>0$ determines the strength of the Total Variation (TV) regularisation \citep{chambolle2004algorithm}. This regulariser is used to train the U-net but then substituted for the Gaussian regulariser described in the main text for the purpose of inference in the linearised network. As discussed in \cref{appendix:discussion}, this results in a valid inference scheme when \cref{rec:linear-weights} is applied.

We use the U-net like architecture deployed by \citet{barbano2021dip}. Group norm is placed before every $3\times 3$ convolution operation. The U-net architecture is a encoder-decoder, fully convolutional deep model constructing multi-level feature maps.
We identify 3 distinct blocks for both the encoder branch and and 2 blocks for the the decoder branch: \texttt{In}, \texttt{Down}$_0$, \texttt{Down}$_{1}$, and \texttt{Up}$_{0}$ and \texttt{Up}$_{1}$, respectively. 
The \texttt{In} block consists of a $3\times 3$ convolution. 
\texttt{Down} blocks consist of a $3\times 3$ convolution with stride of 2 followed by a $3\times 3$ convolution operation and a bi-linear up-sampling. 
The \texttt{Up} blocks instead consist of two successive $3\times 3$ convolutional operations. Given the use of the leaky ReLU non-linearity, the normalised parameter groups of this network coincide with the described blocks.
The number of channels is set to 32 at every scale.
Multi-channel feature maps from the \texttt{In} block and from \texttt{Down}$_{0}$ are first transformed via a $1\times 1$ convolutional operation to 4 channel feature maps and then fed to \texttt{Up}$_{1}$, \texttt{Up}$_{0}$.
The reconstructed image is obtained as the output of \texttt{Up}$_{1}$ further processed via a $1\times 1$ convolutional layer.
The total number of parameters is 78k.
This is too many for full Hessian construction on GPU but we get around this issue by performing inference in the lower dimensional space of observations, as described below.
We refer to \citet{antoran2022tomography} for a full list of hyperparameters involved in training the U-net.



We construct a data-driven prior for the reconstruction $z$ by adopting a linear model on the basis expansion given by the Jacobian of the trained U-net. However, differently from the procedure described in \cref{sec:preliminaries}, we leverage the weight-space function-space duality of linear models \citep{RasmussenW06} to formulate our prior in the space of reconstructions directly
\begin{gather*}
    z = J \theta,\quad \theta\sim \mathcal{N}(0, \Lambda^{-1}) \quad \text{and thus} \quad z \sim \mathcal{N}(0, J \Lambda^{-1} J^{\top}).
\end{gather*}
This formulation allows us to perform (full-Hessian) Laplace inference at cost scaling cubically in $|\mcY|$, which is much smaller that $|\Theta|$ for this task. We refer to \citet{antoran2022tomography} for details of the inference procedure employed.

The prior covariance $\Lambda^{-1}$ is a filter-wise block-diagonal matrix which applies separate regularisation to the parameters of each block in the U-net, as described by \citet{barbano2021dip} and \citet{antoran2022tomography}.
For the single regulariser experiment, we keep the same prior structure but tie the marginal prior variance of all parameters. That is, we ensure all entries of the diagonal of $\Lambda^{-1}$ are the same. The parameters of these regularisers are learnt via model evidence optimisation, as described in the main text. 

\subsection{Large scale experiments}\label{app:experimental_setup_big}

For scaling linearised Laplace to large ResNets (i.e. ResNet-18 with 11M parameters and ResNet-50 with 23M parameters), we employ a Kronecker-factorisation of the Hessian/GGN. This is a common way to scale the Laplace approximation to large models \citep{daxberger2021laplace} and was originally proposed in \citet{ritter2018scalable}. We use the recently-released \texttt{laplace} library\footnote{\url{https://github.com/AlexImmer/Laplace}} \citep{daxberger2021laplace} for fitting the KFAC Laplace models. For ResNets with batch norm, we use the reference implementation from the \texttt{torchvision} package\footnote{\url{https://pytorch.org/vision/stable/models.html}}. For ResNets with fixUp, we use a popular open-source implementation\footnote{\url{https://github.com/hongyi-zhang/FixUp}}. To train the ResNet parameters, which will be used as the linearisation points $\tilde \theta$, we use the same hyperparameters as described at the top of \cref{app:experimental_setup_small} for both batch norm and FixUp ResNets.

Finally, in \cref{fig:kfac}, we show that for the big ResNet model from \cref{tab:resnet_models} (i.e. the largest model for which we can tractably compute the full linear model Hessian), the optima of the model evidence induced by the KFAC approximation is similar to that obtained when using the full Hessian. This justifies the usage of KFAC Laplace for model evidence estimation with the full-scale ResNet models (where using the full Hessian is intractable).

\begin{figure}[htb]
    \centering
    \includegraphics[width=0.5\textwidth]{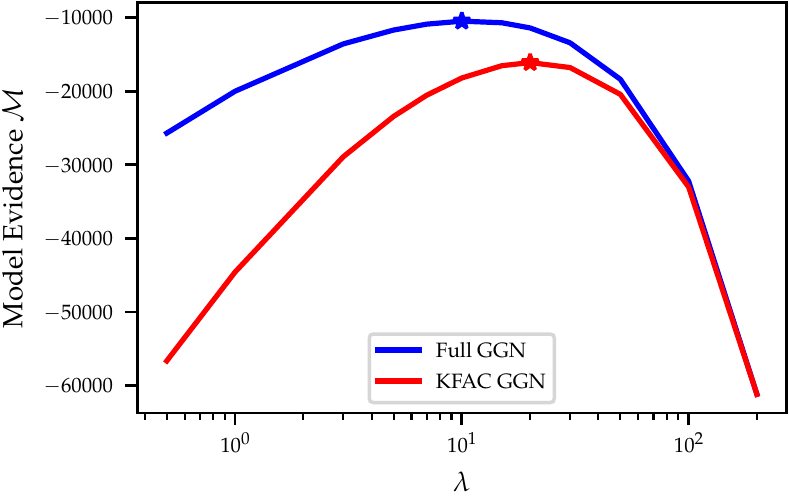}
    \caption{Linear model evidence for the big ResNet (cf.~\cref{tab:resnet_models}) plotted as a function of a single regularisation strength parameter $\lambda$, with $\Lambda = \lambda I$, for both the full Hessian and its KFAC approximation.}
    \label{fig:kfac}
\end{figure}

\clearpage

\section{Additional experimental results}\label{app:additional_results}

In this appendix we provide additional experimental results and analyses that are not included in the main text.

\subsection{Hessian choice}

\cref{fig:app_hess_lamb} shows the differences in the optimised layer-wise regularisation strength values when using the na\"ive objective $\mcM_{\tilde \theta}$ and the recommended objective $\mcM_{\theta_{\star}}$. We also compare using the Hessian of the linear model's loss evaluated at $\tilde \theta$ and at $\theta_{\star}$. The choice of hyperparameter optimisation objective has a very large impact on the resulting regularisation strength, with the na\"ive objective leading to much smaller regularisation strengths. As a result, the linear model's errorbars become too large and test performance deteriorates, as shown in \cref{fig:hessian_bias_choice} in the main text. On the other hand, the choice of Hessian evaluation location has negligible impact on the learnt hyperparameters. Intuitively, the basis functions corresponding to weights closer to the network output are regularised less, since these features are more predictive of the targets. For this experiment we used the 46k parameter ResNet described in \cref{appendix:ResNet_arq} and the MNIST dataset.

\begin{figure}[htbp]
    \centering
    \includegraphics{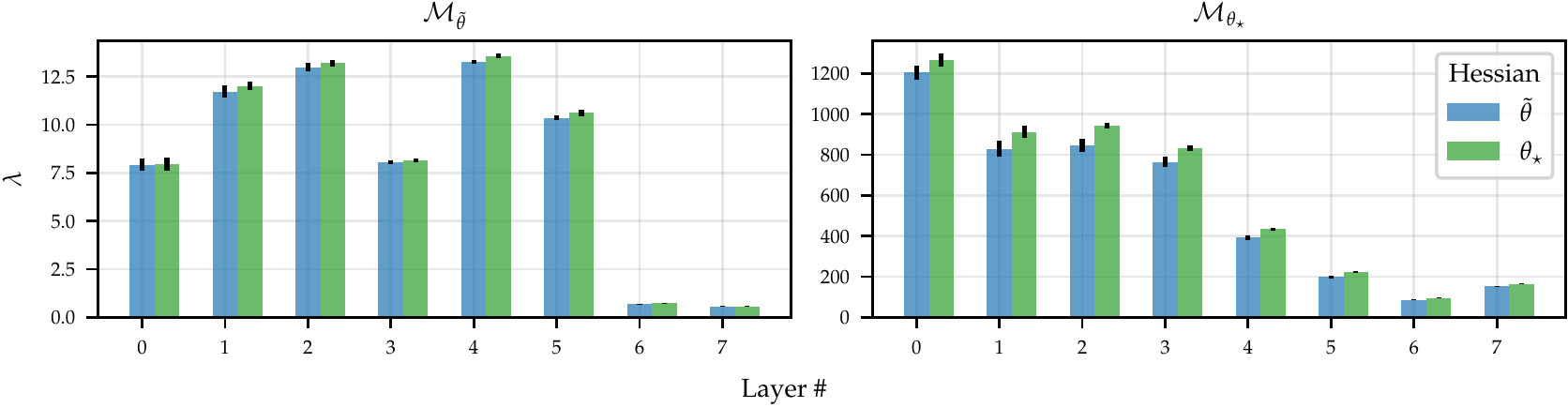}
    \caption{Layer-wise precision values learnt in the Hessian choice experiment. As before, we see that the choice of location around which the Hessian is calculated has a very small impact and the choice of parameters featuring in the norm term. The choice of na\"ive or recommended model evidence objective has a large impact. We note the much larger scale of the precisions in the case of the linear model weights.}
    \label{fig:app_hess_lamb}
\end{figure}

\subsection{Treatment of NN biases}

\cref{fig:app_bayes_lamb} shows the change in layer-wise regularisation strength values obtained when excluding the NN biases from the Jacobian basis expansion. The impact from the exclusion of biases is negligible while the application of \cref{rec:linear-weights} results in a difference of roughly two orders of magnitude in the obtained regularisation strengths. 

\begin{figure}[h]
    \centering
    \includegraphics{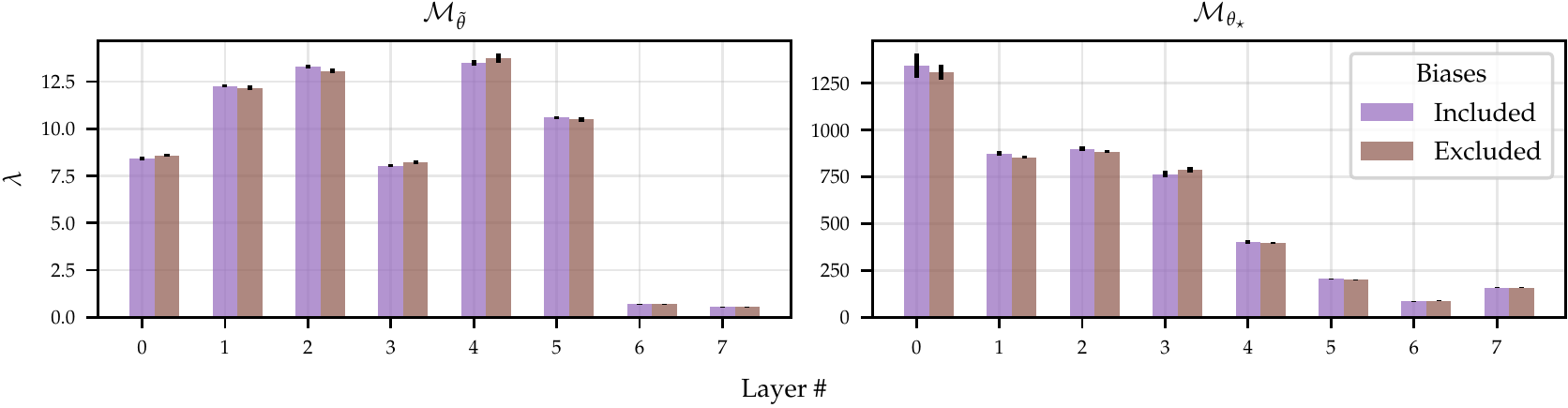}
    \caption{Layer-wise regularisation strengths learnt in the bias experiment of \cref{fig:hessian_bias_choice}. 
    As before, we see that the choice of bias treatment has a smaller effect than the choice of weights featured in the norm of the model evidence. We also note the much larger scale of the precisions in the case of the linear model weights.
    }
    \label{fig:app_bayes_lamb}
\end{figure}

\cref{fig:app_bayes_weights} compares the empirical distributions of the NN linearisation point $\tilde \theta$ and linear model optima parameters $\theta_{\star}$. The parameters of the NN linearisation point present much larger variance than those of the linear model optima.
Once again, we see that the exclusion of biases has little impact.

For this experiment we used the 46k parameter ResNet described in \cref{appendix:ResNet_arq} and the MNIST dataset.

\begin{figure}[h]
    \centering
    \includegraphics{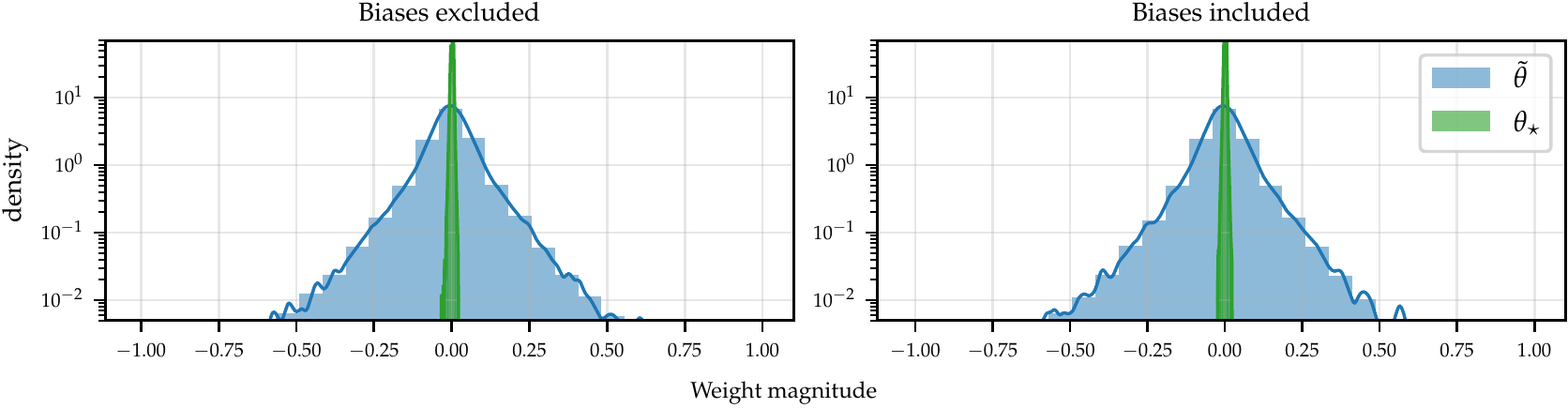}
    \caption{Histograms of the individual entries of $\tilde \theta$ and $\theta_{\star}$ for the models in the bias exclusion experiment of \cref{fig:hessian_bias_choice}.}
    \label{fig:app_bayes_weights}
\end{figure}

\subsection{Early stopping}

\cref{fig:app_early_stopping} shows the evolution of the layerwise optimised regularisation strength values for the Jacobian basis corresponding to the NN at different stages of training. We compare the regularisation parameters found with and without applying \cref{rec:linear-weights} and with and without batch norm. The results of \cref{fig:early_stopping} are confirmed by the fact that that additional training of the NN does not cause the regulariser strengths found with $\mcM_{\theta_{\star}}$ and $\mcM_{\tilde \theta}$ to become closer to each other.

\begin{figure}[htbp]
    \centering
    \includegraphics{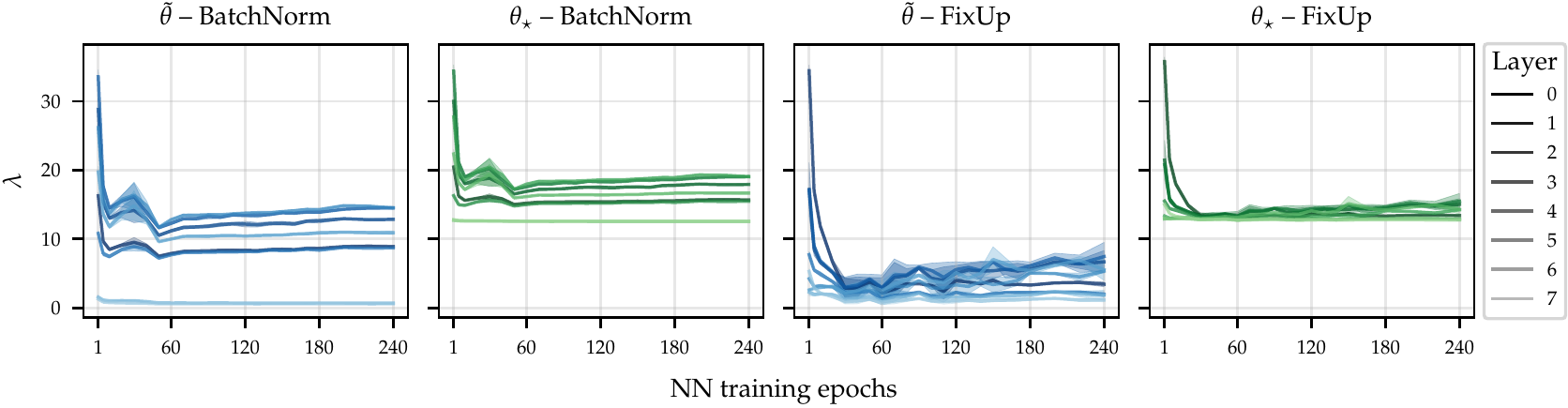}
    \caption{Evolution of precisions for the no early stopping experiment in \cref{fig:early_stopping}. In both the batch norm and fixUp cases, we see that as we train the NN for longer, the regularisation values obtained with $\mcM_{\theta_{\star}}$ and $\mcM_{\tilde \theta}$ converge to different values.}
    \label{fig:app_early_stopping}
\end{figure}

\subsection{Validation of recommendations across architectures}\label{appendix:additional_results_arq}

\cref{tab:mega_model_sweep} extends the results of \cref{tab:model_sweep} to two smaller sized models (\emph{Med} and \emph{Small}). We exclude the U-net architecture for which we only run a single model size with results already shown in \cref{tab:model_sweep}.
We also compare a number of schemes for obtaining the parameters that optimise of the linear model's loss.

Specifically, for each class of model, we compare the simplified linear model (i.e. \cref{eq:simple_linear_model}) optima $\theta_{*,\text{simple}}$ to the Taylor expanded model (i.e. \cref{eq:h-def}) optima $\theta_{*,\text{full}}$. We find that $\theta_{*,\text{simple}}$ performs equal to or (more often) better than $\theta_{*,\text{full}}$. For fully normalised networks, both approaches should be equivalent. However, the simplified model allows for more numerically stable optimisation of the weights. We can gain intuition for this by noting that fitting the Taylor expanded model's parameters can be viewed as regressing onto the neural network's residuals, which are likely to resemble noise. The more accurate parameter optima estimate obtained with the simplified linear model in turn yields a more accurate estimate of the model evidence objective. 

For non-normalised FixUp models, the Taylor expanded model does not simplify, making the superior performance of the simplified objective intriguing.  
We conjecture that this is again a result of the simplified linear model being easier to optimise. We attempted to solve this problem by initialising the Taylor expanded model's weights at the linearisation point $\tilde \theta$. We denote the results obtained with a model evidence featuring a set of parameters in its norm term found in this way as $\theta_{*,\text{full,NN}}$. This tends to perform worse than initialising the Taylor expanded model's weights at 0, the default approach used in all other settings. This result tells us that the Taylor expanded model's parameter optima is likely closer to that of simplified linear model than to the NN linearisation point. However, we can not discard the optimisation instability hypothesis. 

As before, \cref{rec:linear-weights} (using $\mathcal{M}_{\theta_*}$ rather than $\mathcal{M}_{\tilde\theta}$) yields notable improvements in most cases. Also in most cases, \cref{rec:factorised_prior} (using per-layer prior precisions in normalised networks) yields moderate improvements or does not degrade performance for normalised networks. 

\begin{table}[htbp]
    \centering
    \caption{Extension of \cref{tab:model_sweep} to medium and small sized models, as well as to different methods for optimisation of the linear models' parameter optima.}
    \label{tab:mega_model_sweep}
    \begin{tabular}{llllllll}
\toprule
      &                                &                     &                       \textsc{Transformer} &                            \textsc{CNN} &                         \textsc{ResNet} &                     \textsc{Pre-ResNet} &                          \textsc{FixUp} \\
\midrule
\multirow{8}{*}{Big} & \multirow{2}{*}{$\mathcal{M}_{\theta_{\star, \text{simple}}}$} & single $\lambda$ &  0.162 {\tiny \color{gray} $\pm$ 0.042} &  0.025 {\tiny \color{gray} $\pm$ 0.000} &  0.017 {\tiny \color{gray} $\pm$ 0.000} &  0.017 {\tiny \color{gray} $\pm$ 0.000} &  0.020 {\tiny \color{gray} $\pm$ 0.001} \\
      &                                & multiple $\lambda$s &  0.162 {\tiny \color{gray} $\pm$ 0.042} &  0.025 {\tiny \color{gray} $\pm$ 0.000} &  0.016 {\tiny \color{gray} $\pm$ 0.001} &  0.016 {\tiny \color{gray} $\pm$ 0.000} &  0.018 {\tiny \color{gray} $\pm$ 0.000} \\
\cmidrule{2-8}
      & \multirow{2}{*}{$\mathcal{M}_{\theta_{\star, \text{full}}}$} & single $\lambda$ &  0.161 {\tiny \color{gray} $\pm$ 0.042} &  0.073 {\tiny \color{gray} $\pm$ 0.001} &  0.083 {\tiny \color{gray} $\pm$ 0.003} &  0.087 {\tiny \color{gray} $\pm$ 0.001} &  0.055 {\tiny \color{gray} $\pm$ 0.006} \\
      &                                & multiple $\lambda$s &  0.162 {\tiny \color{gray} $\pm$ 0.042} &  0.073 {\tiny \color{gray} $\pm$ 0.001} &  0.066 {\tiny \color{gray} $\pm$ 0.002} &  0.064 {\tiny \color{gray} $\pm$ 0.002} &  0.061 {\tiny \color{gray} $\pm$ 0.005} \\
\cmidrule{2-8}
      & \multirow{2}{*}{$\mathcal{M}_{\theta_{\star, \text{full,NN}}}$} & single $\lambda$ &  0.161 {\tiny \color{gray} $\pm$ 0.042} &  0.211 {\tiny \color{gray} $\pm$ 0.002} &  0.149 {\tiny \color{gray} $\pm$ 0.004} &  0.117 {\tiny \color{gray} $\pm$ 0.002} &  0.060 {\tiny \color{gray} $\pm$ 0.006} \\
      &                                & multiple $\lambda$s &  0.161 {\tiny \color{gray} $\pm$ 0.042} &  0.169 {\tiny \color{gray} $\pm$ 0.002} &  0.138 {\tiny \color{gray} $\pm$ 0.004} &  0.109 {\tiny \color{gray} $\pm$ 0.003} &  0.070 {\tiny \color{gray} $\pm$ 0.011} \\
\cmidrule{2-8}
      & \multirow{2}{*}{$\mathcal{M}_{\tilde{\theta}}$} & single $\lambda$ &  0.310 {\tiny \color{gray} $\pm$ 0.060} &  0.253 {\tiny \color{gray} $\pm$ 0.001} &  0.252 {\tiny \color{gray} $\pm$ 0.006} &  0.220 {\tiny \color{gray} $\pm$ 0.004} &  0.153 {\tiny \color{gray} $\pm$ 0.021} \\
      &                                & multiple $\lambda$s &  0.162 {\tiny \color{gray} $\pm$ 0.042} &  0.205 {\tiny \color{gray} $\pm$ 0.002} &  0.236 {\tiny \color{gray} $\pm$ 0.005} &  0.239 {\tiny \color{gray} $\pm$ 0.004} &  0.200 {\tiny \color{gray} $\pm$ 0.018} \\
\cmidrule{1-8}
\cmidrule{2-8}
\multirow{8}{*}{Med} & \multirow{2}{*}{$\mathcal{M}_{\theta_{\star, \text{simple}}}$} & single $\lambda$ &  0.132 {\tiny \color{gray} $\pm$ 0.026} &  0.024 {\tiny \color{gray} $\pm$ 0.000} &  0.017 {\tiny \color{gray} $\pm$ 0.000} &  0.018 {\tiny \color{gray} $\pm$ 0.000} &  0.017 {\tiny \color{gray} $\pm$ 0.000} \\
      &                                & multiple $\lambda$s &  0.133 {\tiny \color{gray} $\pm$ 0.026} &  0.023 {\tiny \color{gray} $\pm$ 0.000} &  0.016 {\tiny \color{gray} $\pm$ 0.000} &  0.017 {\tiny \color{gray} $\pm$ 0.000} &  0.038 {\tiny \color{gray} $\pm$ 0.021} \\
\cmidrule{2-8}
      & \multirow{2}{*}{$\mathcal{M}_{\theta_{\star, \text{full}}}$} & single $\lambda$ &  0.132 {\tiny \color{gray} $\pm$ 0.025} &  0.060 {\tiny \color{gray} $\pm$ 0.001} &  0.059 {\tiny \color{gray} $\pm$ 0.002} &  0.055 {\tiny \color{gray} $\pm$ 0.002} &  0.062 {\tiny \color{gray} $\pm$ 0.003} \\
      &                                & multiple $\lambda$s &  0.132 {\tiny \color{gray} $\pm$ 0.026} &  0.059 {\tiny \color{gray} $\pm$ 0.001} &  0.050 {\tiny \color{gray} $\pm$ 0.001} &  0.047 {\tiny \color{gray} $\pm$ 0.000} &  0.079 {\tiny \color{gray} $\pm$ 0.024} \\
\cmidrule{2-8}
      & \multirow{2}{*}{$\mathcal{M}_{\theta_{\star, \text{full,NN}}}$} & single $\lambda$ &  0.132 {\tiny \color{gray} $\pm$ 0.025} &  0.154 {\tiny \color{gray} $\pm$ 0.001} &  0.097 {\tiny \color{gray} $\pm$ 0.002} &  0.074 {\tiny \color{gray} $\pm$ 0.002} &  0.072 {\tiny \color{gray} $\pm$ 0.005} \\
      &                                & multiple $\lambda$s &  0.132 {\tiny \color{gray} $\pm$ 0.025} &  0.124 {\tiny \color{gray} $\pm$ 0.002} &  0.087 {\tiny \color{gray} $\pm$ 0.001} &  0.072 {\tiny \color{gray} $\pm$ 0.002} &  0.065 {\tiny \color{gray} $\pm$ 0.003} \\
\cmidrule{2-8}
      & \multirow{2}{*}{$\mathcal{M}_{\tilde{\theta}}$} & single $\lambda$ &  0.241 {\tiny \color{gray} $\pm$ 0.028} &  0.190 {\tiny \color{gray} $\pm$ 0.002} &  0.165 {\tiny \color{gray} $\pm$ 0.005} &  0.132 {\tiny \color{gray} $\pm$ 0.004} &  0.183 {\tiny \color{gray} $\pm$ 0.008} \\
      &                                & multiple $\lambda$s &  0.133 {\tiny \color{gray} $\pm$ 0.025} &  0.153 {\tiny \color{gray} $\pm$ 0.002} &  0.176 {\tiny \color{gray} $\pm$ 0.004} &  0.167 {\tiny \color{gray} $\pm$ 0.001} &  0.402 {\tiny \color{gray} $\pm$ 0.213} \\
\cmidrule{1-8}
\cmidrule{2-8}
\multirow{8}{*}{Small} & \multirow{2}{*}{$\mathcal{M}_{\theta_{\star, \text{simple}}}$} & single $\lambda$ &  0.112 {\tiny \color{gray} $\pm$ 0.016} &  0.025 {\tiny \color{gray} $\pm$ 0.000} &  0.017 {\tiny \color{gray} $\pm$ 0.001} &  0.018 {\tiny \color{gray} $\pm$ 0.001} &  0.018 {\tiny \color{gray} $\pm$ 0.000} \\
      &                                & multiple $\lambda$s &  0.112 {\tiny \color{gray} $\pm$ 0.016} &  0.024 {\tiny \color{gray} $\pm$ 0.000} &  0.018 {\tiny \color{gray} $\pm$ 0.000} &  0.018 {\tiny \color{gray} $\pm$ 0.000} &  0.016 {\tiny \color{gray} $\pm$ 0.001} \\
\cmidrule{2-8}
      & \multirow{2}{*}{$\mathcal{M}_{\theta_{\star, \text{full}}}$} & single $\lambda$ &  0.112 {\tiny \color{gray} $\pm$ 0.016} &  0.045 {\tiny \color{gray} $\pm$ 0.001} &  0.040 {\tiny \color{gray} $\pm$ 0.001} &  0.040 {\tiny \color{gray} $\pm$ 0.001} &  0.055 {\tiny \color{gray} $\pm$ 0.003} \\
      &                                & multiple $\lambda$s &  0.112 {\tiny \color{gray} $\pm$ 0.016} &  0.045 {\tiny \color{gray} $\pm$ 0.000} &  0.038 {\tiny \color{gray} $\pm$ 0.000} &  0.035 {\tiny \color{gray} $\pm$ 0.001} &  0.051 {\tiny \color{gray} $\pm$ 0.001} \\
\cmidrule{2-8}
      & \multirow{2}{*}{$\mathcal{M}_{\theta_{\star, \text{full,NN}}}$} & single $\lambda$ &  0.112 {\tiny \color{gray} $\pm$ 0.016} &  0.085 {\tiny \color{gray} $\pm$ 0.001} &  0.059 {\tiny \color{gray} $\pm$ 0.001} &  0.048 {\tiny \color{gray} $\pm$ 0.001} &  0.060 {\tiny \color{gray} $\pm$ 0.003} \\
      &                                & multiple $\lambda$s &  0.112 {\tiny \color{gray} $\pm$ 0.016} &  0.075 {\tiny \color{gray} $\pm$ 0.001} &  0.054 {\tiny \color{gray} $\pm$ 0.001} &  0.047 {\tiny \color{gray} $\pm$ 0.001} &  0.060 {\tiny \color{gray} $\pm$ 0.005} \\
\cmidrule{2-8}
      & \multirow{2}{*}{$\mathcal{M}_{\tilde{\theta}}$} & single $\lambda$ &  0.208 {\tiny \color{gray} $\pm$ 0.021} &  0.108 {\tiny \color{gray} $\pm$ 0.001} &  0.095 {\tiny \color{gray} $\pm$ 0.002} &  0.085 {\tiny \color{gray} $\pm$ 0.002} &  0.161 {\tiny \color{gray} $\pm$ 0.011} \\
      &                                & multiple $\lambda$s &  0.114 {\tiny \color{gray} $\pm$ 0.016} &  0.095 {\tiny \color{gray} $\pm$ 0.001} &  0.118 {\tiny \color{gray} $\pm$ 0.002} &  0.119 {\tiny \color{gray} $\pm$ 0.003} &  0.178 {\tiny \color{gray} $\pm$ 0.003} \\
\bottomrule
\end{tabular}
\end{table}

\subsection{U-Net tomographic reconstruction}

We illustrate the tomographic reconstruction task in \cref{fig:tomographic-reconstruction-main} by showing the results obtained for one of the KMNIST images used in our experiments. For this example, we use layerwise regularisation parameters, following \cref{rec:factorised_prior}, but ablate \cref{rec:linear-weights}. That is, we compare the use of $\mcM_{\tilde \theta}$ and $\mcM_{\theta_{\star}}$. The former approach not only produces much larger pixel-wise uncertainties but also assigns uncertainty to large portions on the reconstructed image. Applying \cref{rec:linear-weights} yields smaller magnitude errorbars with the placement high uncertainty pixels matching the regions of increased reconstruction error much more closely. The histogram reveals the latter approach to result in much better calibrated uncertainty estimates.

\begin{figure}[h]
    \centering
    \includegraphics[width=\textwidth]{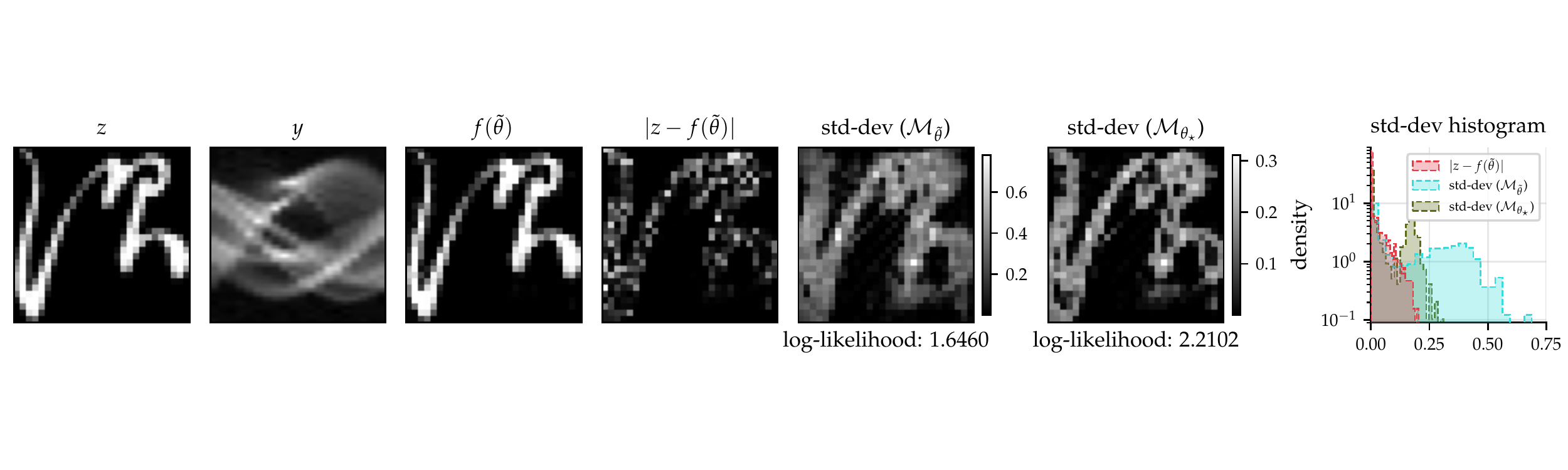}
    \vspace{-2cm}
    \caption{Enumerating the 7 plots from left to right: 1. Original KMNIST image. 2. Low dimensional projection obtained from applying \cref{eq:inverse_problem}. 3. Tomographic reconstruction from our U-net. 4. Associated pixel error in absolute value. 5. The pixel-wise marginal predictive standard deviation obtained when employing $\mcM_{\tilde \theta}$.  6. Same for $\mcM_{\theta_{\star}}$. 7. Histograms comparing reconstruction error with predictive standard deviation for both approaches under consideration.}
    \label{fig:tomographic-reconstruction-main}
\end{figure}

We further investigate the difference between the two objectives $\mcM_{\tilde \theta}$ and  $\mcM_{\theta_{\star}}$ in \cref{fig:DIP_prior_vars} by plotting the evolution of the block-wise regularisation strengths during successive iterations of these parameters' optimisation. The objective $\mcM_{\tilde \theta}$ results in most blocks' regularisation parameter $\lambda$ going to 0 (and thus the marginal prior variances diverging) for all 10 test images used in our experiments. This is due to the norm of the linearisation point $\tilde\theta$ being very large for the U-net. On the other hand, the  $\mcM_{ \theta_\star}$ objective is well behaved.

\begin{figure}[h]
    \centering
    \includegraphics[width=\textwidth]{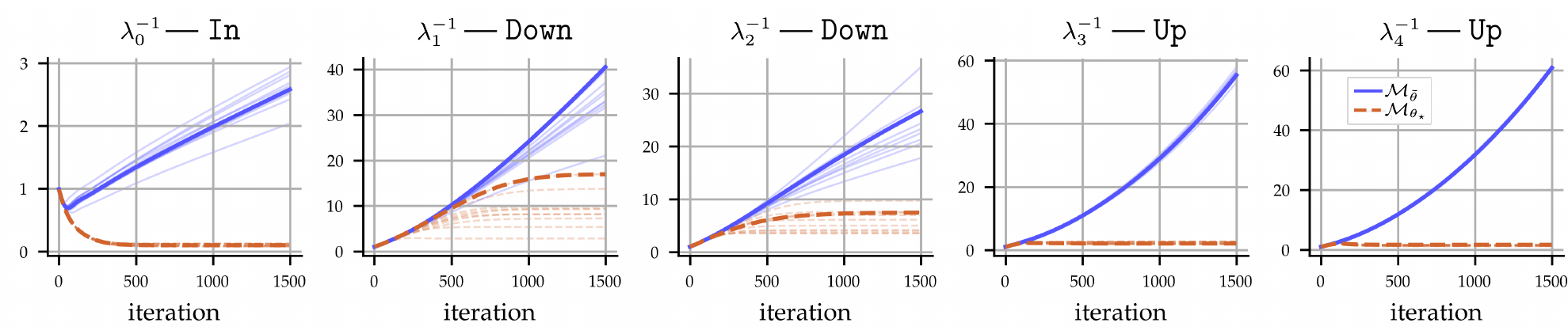}
    \caption{Hyperparameter optimisation traces for our U-net. Each figure corresponds to the regularisation parameter for a different convolutional block described in \cref{appendix:unet-setup}. The thick lines correspond to optimisation for the image shown in \cref{fig:tomographic-reconstruction-main}. Thin transparent lines correspond to the other 9 images used in our experiments. $\mcM_{\tilde \theta}$ leads hyperparameters to diverge, while $\mcM_{\theta_{\star}}$ converges.}
    \label{fig:DIP_prior_vars}
\end{figure}

\subsection{Image classification with full-scale ResNets}

In \cref{fig:resnet18_mnist} and \cref{fig:resnet50_cifar}, we plot the model evidence as well as test log-likelihoods (minus the NLL reported in \cref{sec:experiments}) obtained by the different models as a function of a single regularisation parameter $\lambda$, with $\Lambda = \lambda I$. \cref{fig:resnet18_mnist} considers ResNet-18 trained on MNIST, while \cref{fig:resnet50_cifar} considers ResNet-50 trained on CIFAR10. For both ResNets, we consider batch norm and FixUp .
We can see that in both settings, the optima of $\mcM_{ \theta_{\star}}$ corresponds to a larger $\lambda$ than that of $\mcM_{\tilde \theta}$. The former provides a higher test log-likelihood.
For the FixUp models, we further consider both the simplified and the full Taylor expanded linearised models (denoted $\mcM^{\star}_{\text{simple}}$ and $\mcM^{\star}_{\text{full}}$ respectively) for the purpose of finding the posterior mode $\theta_{\star}$ (recall that both linear models are equivalent for normalised networks). We see that the simple linear model finds larger $\lambda$ values than the Taylor expanded model and obtains better test log-likelihood.
Finally, \cref{tab:resnet18_mnist} is analogous to \cref{tab:resnet50_cifar} in that it compares the test log-likelihoods when using a single $\lambda$ value versus using a separate $\lambda$ values per layer, but reports results for ResNet-18 on MNIST. As in \cref{tab:resnet50_cifar}, we observe that using multiple $\lambda$ values generally yields better results.

\begin{figure}[htbp]
	\begin{minipage}{0.49\textwidth}
    	\centering
        \includegraphics[width=0.50\textwidth]{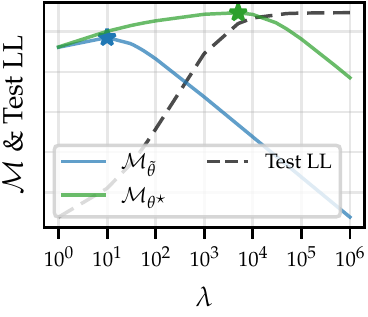}
        \includegraphics[width=0.485\textwidth]{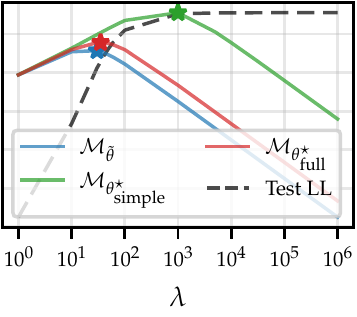}
		\captionof{figure}{ResNet-18 with BatchNorm (left) and FixUp (right) on MNIST. Model evidence and test log-likelihood as a function of $\lambda$.}
		\label{fig:resnet18_mnist}
    \end{minipage}\hfill
	\begin{minipage}{0.49\textwidth}
    	\centering
        \includegraphics[width=0.5\textwidth]{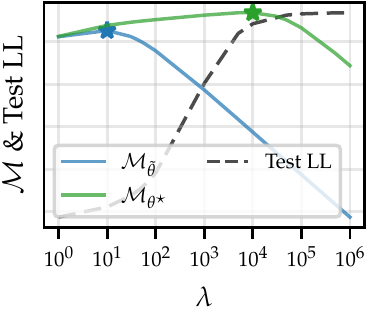}
        \includegraphics[width=0.485\textwidth]{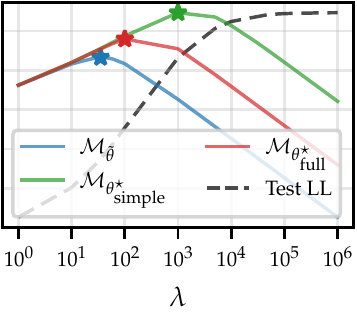}
    	\captionof{figure}{ResNet-50 with BatchNorm (left) and FixUp (right) on CIFAR10. Model evidence and test log-likelihood as a function of~$\lambda$.}
    	\label{fig:resnet50_cifar}
    \end{minipage}
\end{figure}

\begin{table*}[ht]
    \centering
    \caption{Test negative log-likelihoods for ResNet-18 on MNIST.}
    \label{tab:resnet18_mnist}
    \begin{tabular}{llll}
    \toprule
     & & \textsc{BatchNorm} & \textsc{FixUp}\\
    \midrule
    \multirow{2}{*}{$\mcM_{\theta_{\star, \text{simple}}}$} & single $\lambda$ & \textbf{0.003} {\tiny \color{gray} $\pm$ 0.000} & 0.007 {\tiny \color{gray} $\pm$ 0.001} \\
     & multiple $\lambda$s & \textbf{0.003} {\tiny \color{gray} $\pm$ 0.000} & \underline{\textbf{0.004}} {\tiny \color{gray} $\pm$ 0.000} \\
    \cmidrule{1-4}
    \multirow{2}{*}{$\mcM_{\theta_{\star, \text{full}}}$} & single $\lambda$ & 0.166 {\tiny \color{gray} $\pm$ 0.019} & 0.143 {\tiny \color{gray} $\pm$ 0.013} \\
     & multiple $\lambda$s & \underline{0.100} {\tiny \color{gray} $\pm$ 0.015} & \underline{0.100} {\tiny \color{gray} $\pm$ 0.012} \\
    \cmidrule{1-4}
    \multirow{2}{*}{$\mcM_{\tilde\theta}$} & single $\lambda$ & 0.529 {\tiny \color{gray} $\pm$ 0.009} & 0.288 {\tiny \color{gray} $\pm$ 0.015} \\
     & multiple $\lambda$s & 0.520 {\tiny \color{gray} $\pm$ 0.009} & \underline{0.237} {\tiny \color{gray} $\pm$ 0.017} \\
    \bottomrule
    \end{tabular}
\end{table*}

\end{document}